%% file: main.tex
\documentclass{article}

\usepackage{microtype}
\usepackage{graphicx}
\usepackage{subcaption}
\usepackage{booktabs} %

\usepackage{hyperref}

\usepackage[preprint]{icml2026}

\usepackage{amsmath}
\usepackage{amssymb}

\input{preamble.tex}

\mathtoolsset{showonlyrefs}
\input{certified.tex} %
\theoremstyle{plain}
\newtheorem{theorem}{Theorem}[section]

\theoremstyle{definition}
\newtheorem{definition}[theorem]{Definition}
\newtheorem{assumption}[theorem]{Assumption}
\theoremstyle{remark}
\newtheorem{remark}{Remark}

\usepackage[textsize=tiny]{todonotes}

\icmltitlerunning{Error Feedback Algorithms in Distributed Optimization}

\begin{document}

\twocolumn[

  \icmltitle{A Tight Theory of Error Feedback Algorithms in Distributed Optimization}

  \icmlsetsymbol{equal}{*}

  \begin{icmlauthorlist}
    \icmlauthor{Daniel Berg Thomsen}{ens,x}
    \icmlauthor{Adrien Taylor}{ens}
    \icmlauthor{Aymeric Dieuleveut}{x}
  \end{icmlauthorlist}

  \icmlaffiliation{ens}{Inria, D.I. ENS, CNRS, PSL Research University, Paris, France}
  \icmlaffiliation{x}{CMAP, École Polytechnique, Institut Polytechnique de Paris, Palaiseau, France}

  \icmlcorrespondingauthor{Daniel Berg Thomsen}{daniel.berg-thomsen@inria.fr}

  \icmlkeywords{Machine Learning, ICML}

  \vskip 0.3in
]

\printAffiliationsAndNotice{}  %

\begin{abstract}
  Communication costs are a major bottleneck in distributed learning and first-order optimization. A
  common approach to alleviate this issue is to compress the gradient information exchanged between
  agents. However, such compression typically degrades the convergence guarantees of gradient-based
  methods. \emph{Error feedback} mechanisms provide a simple and computationally cheap remedy for this
  issue, but numerous variants have been proposed, and their relative performance remains poorly
  understood. This paper provides \emph{tight} convergence analyses for two of the main error-feedback algorithms from the literature,
  the classic Error Feedback method ($\EF$) and Error Feedback 21 ($\EFtw$), by identifying optimal step-size choices
  and constructing optimal Lyapunov functions tailored to each method. The results hold independently
  of the number of agents and recover the known best guarantees possible in the single-agent regime.
\end{abstract}

\section{Introduction}
The trend toward larger model usage in machine learning has made training in distributed environments a practical necessity.
In many applications, including federated learning, data are partitioned across \(n\) agents
and a central server coordinates the process~\cite{mcmahan_communication-efficient_2017,kairouz_advances_2019}.
Consider the finite-sum problem
\begin{equation}\label{eq:main_problem}
  \min_{x \in \R^d} \left[f(x) \coloneqq \frac{1}{n} \sum_{i=1}^n f^{(i)}(x)\right],
\end{equation}
where only agent \(i\) has access to first-order information about \(f^{(i)}\).
Communication follows a star topology: at each round, agents send messages to the server, the
server aggregates them into a new iterate and broadcasts this iterate to all agents.
Because many agents communicate concurrently, the agents-to-server traffic is often the dominant
bottleneck~\cite{seide_1-bit_2014,chilimbi_project_2014,strom_scalable_2015}, especially for large models seen in deep learning.

A standard approach to mitigate communication costs is to reduce the frequency of
communication~\cite{mcmahan_communication-efficient_2017,karimireddy_scaffold_2020,mishchenko2022proxskip} and/or to
transmit \emph{compressed} messages.
Compression may be applied on the uplink (agents$\to$server)~\cite{seide_1-bit_2014,alistarh_qsgd_2017,richtarik_ef21_2021}
and/or downlink (server$\to$agents)~\cite{harrane_reducing_2018,philippenko_artemis_2020,gorbunov_linearly_2020}; the focus here is on uplink compression.
Common compressors include low-precision quantization~\cite{alistarh_qsgd_2017}, sparsification (e.g., Top-$K$,
~\citealt{alistarh_convergence_2018}), and using random projections~\cite{vempala2005random} like in
\textsc{Sketched-SGD}~\cite{ivkin_communication-efficient_2019}.

To analyze algorithms independently of a particular compressor, one typically assumes general
properties of a (possibly random) compression operator \(\C\).
Compression is modeled as a family of deterministic maps indexed by a seed \(\omega\),
written \(\C(\cdot; \omega): \mathbb{R}^d \to \mathbb{R}^d\). Unless stated otherwise, new
seeds are drawn for the compression calls at each iteration; at a given iteration, the
same seed may be used by several agents.
When needed, \(\omega_k^{(i)}\) denotes
the seed used by agent \(i\) in the compression step at iteration \(k\), and
\(\omega_k \coloneqq (\omega_k^{(i)})_{i=1}^n\); expectations are taken with respect to the
relevant seeds.
A classical assumption is unbiasedness, i.e., \(\E_{\omega}[\C(x; \omega)] = x\) for all \(x \in \mathbb{R}^d\).
Another widely used assumption is \emph{contractiveness}:
\begin{assumption}[Contractive compressor]\label{as:compression}
  The compression operator \(\C(\cdot; \omega)\) is such that, for some \(\epsilon \in [0, 1)\),
  \begin{equation}
    \text{for all} \quad x \in \mathbb{R}^d, \quad \E_{\omega}\left[\|x - \C(x; \omega)\|^2\right] \leq \epsilon \|x\|^2.
  \end{equation}
\end{assumption}

A natural baseline is \emph{Compressed Gradient Descent} ($\CGD$):
\begin{equation}
  x_{k+1} \coloneqq x_k - \frac{\eta}{n} \sum_{i=1}^n \C(\nabla f^{(i)}(x_k); \omega_k^{(i)}),
  \tag{\CGD}\label{eq:cgd_update}
\end{equation}
where \(\eta > 0\) is the step size.
However, $\CGD$ generally fails to converge in the multi-agent setting, as shown for instance when
\(\C\) is biased in \citet{beznosikov_biased_2020}.

To mitigate the effects of compression, each agent can use \emph{Error Feedback} ($\EF$). The 
classic mechanism~\cite{seide_1-bit_2014,karimireddy_error_2019} keeps track of past compression 
errors and “reinjects” them into later messages, as described in \Cref{alg:ef}.

Another challenge in distributed environments is heterogeneity across agents: the local 
objectives $f^{(i)}$ can differ substantially, for example due to mismatched data distributions 
or uneven scaling induced by non-uniform data partitioning. Such heterogeneity can obstruct 
convergence. In the strongly convex setting, \citet{beznosikov_biased_2020} 
prove linear convergence of distributed \(\EF\) with biased compression, but only under the 
additional assumption that all local objectives share the same minimizer—i.e., the 
interpolation regime~\cite{ma2018power,vaswani2019fast}.

Motivated by the need to directly handle heterogeneity, \citet{richtarik_ef21_2021} proposed
\emph{Error Feedback 21}
(\(\EFtw\)), given in \Cref{alg:ef21}.
In \(\EFtw\), each agent maintains an estimator \(d_k^{(i)}\) of the local gradient.
At iteration \(k\), the server first updates \(x_{k+1}\) using the current estimators
\(d_k^{(i)}\). The agents then communicate compressed differences
\(\C(\nabla f^{(i)}(x_{k+1}) - d_k^{(i)}; \omega_k^{(i)})\), which are used to update
the estimators for the next iteration. This tracking step is argued to be more stable
around the global optimum of the sum total objective~\eqref{eq:main_problem}.

By now, a substantial literature has been devoted to the analysis of both algorithms under different assumptions on the
communication model~\cite{koloskova_decentralized_2019,philippenko_preserved_2021}, the compression
operator~\cite{alistarh_qsgd_2017,stich_sparsified_2018,beznosikov_biased_2020}, and the objective
functions~\cite{karimireddy_error_2019,stich_error-feedback_2020,richtarik_ef21_2021}.
The vast literature on the subject is further complicated by the many variants of these methods that have been proposed, see e.g.,~\citet{zheng_communication-efficient_2019,li2022analysis,tang2021errorcompensatedx,fatkhullin2023momentum,tian2026ef21,condat2022ef,fatkhullin2025ef21,gruntkowska2025error,egger2025bicompfl,redie2026sapefstepaheadpartialerror}, among others.

Many of the above papers provide \textit{theoretical guarantees}, most often in
the form of convergence rates that upper-bound the value of a hand-crafted
metric. Yet these bounds can be loose, the analysis pessimistic, or the
metric itself poorly matched to the phenomenon of interest, which complicates
meaningful comparisons between algorithms. Building on recent advances in
computer-aided proofs for optimization and automated construction of Lyapunov functions,
\citet{bergthomsen2025errorfeedback} derived \textit{tight}
convergence rates for $\CGD$, $\EF$, and $\EFtw$. Their rates are optimistic compared to existing
results and give a \textit{definitive} characterization of the worst-case
convergence of these methods on smooth and strongly convex functions; notably,
they also show that $\EF$ and $\EFtw$ attain the same optimized worst-case
rate. However, this analysis is restricted to the single-agent setting $n=1$,
which remains a significant theoretical and practical limitation.

\subsection{Contributions.}
  In this paper, we provide \textit{tight analyses} of both \(\EF\) and \(\EFtw\) in the multi-agent setting, \(n>1\),
under contractive compression (\Cref{as:compression}) and strong convexity and smoothness of the individual objectives.
The formal guarantees cover the regimes stated in the detailed contributions below. Outside those regimes,
our general-heterogeneity claims are stated as empirically validated laws. We show that the behavior in the distributed setting can differ from the single agent case, in particular due to \textit{heterogeneity} between agents. 
We distinguish between two types of heterogeneity: statistical heterogeneity (local minimizers
can differ across agents) and heterogeneous regularity parameters (local smoothness and
strong-convexity constants can differ across agents).

These rates are \textit{tight} in the method-specific sense used throughout
the paper: within the stated state and candidate Lyapunov classes, they identify the
\textit{best tuning}, the \textit{best Lyapunov certificate}, and the
\textit{smallest worst-case contraction factor}. They should not be read as
method-agnostic communication-complexity lower bounds over all compressed distributed
methods. To obtain them, we combine advanced proof techniques with numerical evaluations
that corroborate the theoretical predictions. Altogether, our results provide a
systematic picture of the worst-case behavior of these methods in the regimes
covered by our formal guarantees, complemented by empirical laws for the general
heterogeneous case.
  
\paragraph{Rigor: theorems, certificates, and empirical laws.} 
Most of our results are stated either as a theorem with a complete proof or as
a counterexample. To facilitate verification, we also attach
\textbf{proof certificates} to each formal statement, which serve as either
analytical or numerical validation of the stated guarantee. Specifically,
certificates of correctness are provided either
\begin{enumerate}
[label=(\roman*), topsep=0pt, itemsep=0pt, leftmargin=*]
  \item
    by a
    \textit{Computer Algebra System} (CAS) via a Wolfram Language script, or
  \item by numerically solving the associated \textit{Performance Estimation Problems} (PEP).
\end{enumerate}
CAS certificates
verify algebraic identities, whereas PEP certificates provide numerical confirmation of complete
statements. In the paper, the \includegraphics[height=1em]{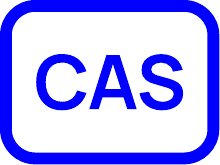}
and \includegraphics[height=1em]{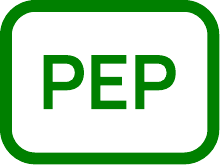} certificate badges indicate the available certificates
and link directly to the corresponding Wolfram
Language script or Jupyter notebook in the public GitHub repository.\footnotemark
\footnotetext{These certificates do not replace the mathematical proofs in the paper; rather, they provide an
  additional layer of transparency and error checking, analogous to unit tests in software development.
  They offer a reproducible, independently verifiable record supporting the theoretical claims and help
reduce the risk of oversights in complex derivations.}

For cases in which a formal proof could not be identified, we state certain results 
as \textbf{empirical laws} instead. These laws are supported by extensive numerical evidence
indicating that a proposed closed-form expression matches, up to numerical precision,
the corresponding analytical quantity, which may itself lack a closed-form representation.

\begin{table*}[t]
  \centering
  \vspace{0pt}
  \centering
  \small
  \begin{tblr}{
    colspec={|Q[c,m,wd=0.07\linewidth]|Q[c,m,wd=0.16\linewidth]|Q[c,m,wd=0.23\linewidth]|Q[c,m,wd=0.20\linewidth]|Q[c,m,wd=0.21\linewidth]|},
    colsep=5pt,
    rowsep=3pt,
    row{1}={font=\bfseries},
    hline{1,2,7}={1-5}{},
    hline{3,6}={1-4}{},
    hline{4}={2,3}{},
    hline{5}={2-4}{},
  }
    Agents & Heterogeneity & \(\boldsymbol{\EF}\) & \(\boldsymbol{\EFtw}\) & \(\boldsymbol{\EC}\) \\
    $n=1$ & \textemdash & \SetCell[c=2]{c,m}{\citet{bergthomsen2025errorfeedback}} & & \SetCell[r=5]{c,m}{Empirical Law~\ref{law:econtrol_tuning} \href[pdfnewwindow=true]{https://github.com/DanielBergThomsen/distributed-error-feedback/blob/main/certificates/empirical_laws/empirical_law_econtrol_tuning.ipynb}{\PEPbadge}} \\
    \SetCell[r=3]{c,m}{$n>1$} & None & Theorem~\ref{thm:ef} \href[pdfnewwindow=true]{https://github.com/DanielBergThomsen/distributed-error-feedback/blob/main/certificates/EF/theorem2.ipynb}{\PEPbadge}\,\href[pdfnewwindow=true]{https://github.com/DanielBergThomsen/distributed-error-feedback/blob/main/certificates/EF/ef_multiworker.wls}{\CASbadge} & \SetCell[r=2]{c,m}{Theorem~\ref{thm:ef21} \href[pdfnewwindow=true]{https://github.com/DanielBergThomsen/distributed-error-feedback/blob/main/certificates/EF21/theorem1.ipynb}{\PEPbadge}\,\href[pdfnewwindow=true]{https://github.com/DanielBergThomsen/distributed-error-feedback/blob/main/certificates/EF21/ef21_multiworker.wls}{\CASbadge}} & \\
    & Statistical\textsuperscript{\(\dagger\)} & \SetCell{c,m}{\textit{Counterexample}\\\footnotesize 2-cycle (Proposition~\ref{prop:counterexample_ef})} & & \\
    & Regularity parameters\textsuperscript{\(\ddagger\)} & \SetCell[c=2]{c,m}{Corollary~\ref{cor:ef21_linear} (Linear)\textsuperscript{*}} & & \\
    $n=2$ & Both\textsuperscript{\(\dagger\)}\textsuperscript{\(\ddagger\)} & \SetCell[c=2]{c,m}{Empirical Law~\ref{law:rate_n2} \href[pdfnewwindow=true]{https://github.com/DanielBergThomsen/distributed-error-feedback/blob/main/certificates/empirical_laws/empirical_law_n2_rate.ipynb}{\PEPbadge}} & & \\
\end{tblr}
  \caption{Summary of convergence results and empirical tuning laws for \(\EFtw\), \(\EF\), and \(\EC\). \(\dagger\) Statistical heterogeneity means that the local minimizers \(x_\star^{(i)}\) are not identical; \(\ddagger\) heterogeneous regularity parameters mean that the local smoothness and strong-convexity constants are not identical; \(*\) the linear corollary additionally assumes (beyond contractive compression) that \(\C\) is deterministic, additive, and positively homogeneous (\(\C(x+y)=\C(x)+\C(y)\), \(\C(\alpha x)=\alpha \C(x)\) for \(\alpha\ge0\)). Heterogeneous guarantees for \(\EFtw\) and \(\EF\) are currently limited to linear compressors and the \(n=2\) empirical law. PEP and CAS badges link to certificates where available.}
  \label{tab:intro_summary}

\end{table*}

\paragraph{Detailed contributions and outline.} More precisely, we make the following contributions, summarized in \Cref{tab:intro_summary}.
\begin{enumerate}[label=(\roman*), topsep=0pt, itemsep=0pt, leftmargin=*]
  \item In \Cref{sec:background}, we exhibit simple quadratic counterexamples showing that, for contractive compressors, both $\CGD$ and
    classic \(\EF\) exhibit cycles in heterogeneous multi-agent settings and therefore
    do not converge in general. 
  \item For \(\EFtw\) under contractive compression and statistical heterogeneity, we derive in \Cref{subsec:eftw} a tight Lyapunov
    function, optimal step size, and worst-case contraction factor, and the optimal rate is
    shown to be independent of the number of agents.
  \item For deterministic, additive, positively homogeneous compressors with heterogeneous regularity parameters, we also prove a sharp linear convergence guarantee for both \(\EFtw\) and \(\EF\) with averaged parameters \((\bar L, \bar \mu)\).
  \item In \Cref{subsec:efhomo}, we construct a tight Lyapunov function for classic \(\EF\) under statistical homogeneity, and show
    that its optimal step size and rate coincide with those of \(\EFtw\), recovering the single-agent results for any \(n\).
  \item Finally, in \Cref{sec:empirical_laws}, we state \textit{empirical laws} that characterize 
  the optimal step size for \(\EF\) 
        and \(\EFtw\) under general heterogeneity, an optimal Lyapunov function for \(\EFtw\),
	    an explicit polynomial rate formula for the \(n=2\) case, and an optimal two-parameter 
 	    tuning rule for \(\EC\)~\citep{gao2023econtrol}, supported by extensive numerical 
	    verification via numerical solution of the corresponding performance estimation problems.
\end{enumerate}

\begin{figure*}[t]
  \centering
  \begin{minipage}[t]{0.49\textwidth}
    \begin{algorithm}[H]
      \caption{Classic error feedback --- $\EF$}
      \label{alg:ef}
      \begin{algorithmic}[1]
        \STATE{\bfseries initialization:} \(x_0 \in \R^d, \eta > 0, e^{(i)}_0 = 0\) for each \(i \in [n]\);
        seeds \((\omega_k^{(i)})_{k \ge 0}\) given
        \FOR{\(k = 0, 1, 2, \ldots, K-1\)}
        \STATE Each agent \(i \in [n]\) compresses \(e_k^{(i)} + \eta \nabla f^{(i)}(x_k)\) and communicates \(m_k^{(i)} \coloneqq \C(e_k^{(i)} + \eta \nabla f^{(i)}(x_k); \omega_k^{(i)})\)
        \STATE Each agent \(i \in [n]\) updates \(e_{k+1}^{(i)} \gets e_k^{(i)} + \eta \nabla f^{(i)}(x_k) - m_k^{(i)}\)
        \STATE Server updates \(x_{k+1} \gets x_k - \frac{1}{n}\sum_{i=1}^n m_k^{(i)}\)
        \ENDFOR
      \end{algorithmic}
    \end{algorithm}
  \end{minipage}
  \hfill
  \begin{minipage}[t]{0.49\textwidth}
    \begin{algorithm}[H]
      \caption{Error Feedback 21 --- $\EFtw$}
      \label{alg:ef21}
        \begin{algorithmic}[1]
          \STATE{\bfseries initialization:} \(x_0 \in \R^d;\)
          step size \(\eta > 0;\)
        seeds \((\omega_k^{(i)})_{k \ge -1}\) given;
        \(d^{(i)}_0 = \C(\nabla f^{(i)}(x_0); \omega_{-1}^{(i)})\) for each \(i \in [n]\);
        \FOR{\(k = 0, 1, 2, \ldots, K-1\)}
        \STATE Server updates \(x_{k+1} \gets x_k - \eta \cdot \frac{1}{n} \sum_{i=1}^n d^{(i)}_k\)
        \STATE Each agent \(i \in [n]\) compresses \(\nabla f^{(i)}(x_{k+1}) - d^{(i)}_k\) and communicates \(m^{(i)}_k := \C(\nabla f^{(i)}(x_{k+1}) - d^{(i)}_k; \omega_k^{(i)})\)
        \STATE Each agent \(i \in [n]\) updates \(d^{(i)}_{k+1} \gets d^{(i)}_k + m^{(i)}_k\)
        \ENDFOR
      \end{algorithmic}
    \end{algorithm}
  \end{minipage}
\end{figure*}

In the next section, we introduce the algorithms, assumptions and counterexamples.

\section{Background}\label{sec:background}
This section motivates our multi-agent analysis by showing that classic $\EF$ can fail under
heterogeneity unless additional structure is imposed.
Specifically, \Cref{sec:motivation} provides counterexamples, and \Cref{sec:related_work} reviews
existing theoretical results on $\EF$ and $\EFtw$.
The definitions and notation needed to state the main results of this paper are provided in 
\Cref{sec:assumptions,sec:methodology}.

\subsection{Non-Convergence under Statistical Heterogeneity}\label{sec:motivation}
The following counterexamples demonstrate that neither $\CGD$ nor $\EF$ can
achieve arbitrary accuracy in the presence of statistical heterogeneity.

\paragraph{Compressed Gradient Descent.}
Consider the case where there are $n=2$ agents, each having access to first-order
oracles querying the one-dimensional quadratic functions
\begin{equation}\label{eq:counterexample_functions}
  f^{(1)}(x) \defeq \frac{\mu}{2} x^2 - x, \qquad f^{(2)}(x) \defeq \frac{\mu}{2} x^2 + x,
\end{equation}
where $\mu > 0$ is a constant. These functions are $\mu$-strongly convex
and $L$-smooth for any $L > \mu$. Set
\begin{equation}\label{eq:x0_cgd}
  x_0 \defeq \eta \frac{\sqrt{\epsilon}}{2 - \eta \mu}.
\end{equation}
By definition, for $\CGD$
\begin{equation}\label{eq:x1_cgd}
  x_1 = x_0 - \frac{\eta}{2} \left[\C((f^{(1)})'(x_0); \omega_0^{(1)}) + \C((f^{(2)})'(x_0); \omega_0^{(2)}) \right].
\end{equation}
Under \Cref{as:compression}, the compression oracle may respond
\begin{equation}
  \begin{aligned}
    \C((f^{(1)})'(x_0); \omega_0^{(1)}) &= (1 - \sqrt{\epsilon}) (f^{(1)})'(x_0), \\
    \C((f^{(2)})'(x_0); \omega_0^{(2)}) &= (1 + \sqrt{\epsilon}) (f^{(2)})'(x_0).
  \end{aligned}
\end{equation}
Plugging this, the derivatives of $f^{(1)}$ and $f^{(2)}$, and the definition of $x_0$ into
\eqref{eq:x1_cgd},
\begin{equation}
  \begin{aligned}
    x_1 &= x_0 - \frac{\eta}{2} \left[2\mu x_0 + 2\sqrt{\epsilon} \right] = x_0(1 - \eta \mu) - \eta \sqrt{\epsilon} \\
    &= \frac{\eta \sqrt{\epsilon}}{2 - \eta \mu} \left[(1 - \eta \mu) - (2 - \eta \mu)\right] = -x_0.
  \end{aligned}
\end{equation}
By symmetry, if the compression oracle next responds with
\begin{equation}
  \begin{aligned}
    \C((f^{(1)})'(x_1); \omega_1^{(1)}) &= (1 + \sqrt{\epsilon}) (f^{(1)})'(x_1), \\
    \C((f^{(2)})'(x_1); \omega_1^{(2)}) &= (1 - \sqrt{\epsilon}) (f^{(2)})'(x_1).
  \end{aligned}
\end{equation}
then the same computation gives $x_2 = x_0$, yielding a 2-step cycle.

\paragraph{Error Feedback.}
Consider now exactly the same functions defined in \eqref{eq:counterexample_functions}. The following
proposition shows that the same behavior can be observed in $\EF$.
\begin{restatable}{proposition}{propositionCounterexampleEF}\label{prop:counterexample_ef}
  Let there be $n=2$ agents, each having access to first-order oracles
  querying the one-dimensional quadratic functions defined in \eqref{eq:counterexample_functions}.
  Then, for each fixed step size \(\eta>0\) in the following cases, there exist
  admissible compressor responses under \Cref{as:compression} for which the full \(\EF\)
  state \((x_k,e_k^{(1)},e_k^{(2)})\) is $2$-periodic:
  \begin{enumerate}[topsep=0pt, itemsep=0pt, leftmargin=*]
    \item $\eta < \frac{2}{\mu}$, with $x_0 = \eta \frac{\sqrt{\epsilon}}{2 - \eta \mu}$.
    \item $\eta > \frac{2}{\mu}$, with $x_0 = -\eta \frac{\sqrt{\epsilon}}{2 - \eta \mu}$.
    \item $\eta = \frac{2}{\mu}$, with any $x_0$.
  \end{enumerate}
\end{restatable}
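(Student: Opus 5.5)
The plan is to build the cycle explicitly, using only \emph{deterministic} compressor responses of the form $m = (1+t)u$ with $|t|\le\sqrt{\epsilon}$. Such a response satisfies $\|u-m\|^2 = t^2\|u\|^2 \le \epsilon\|u\|^2$, so it is admissible under \Cref{as:compression}. Write $s\defeq\pm\sqrt{\epsilon}$, with the sign fixed per case, and $u_k^{(i)} \defeq e_k^{(i)} + \eta (f^{(i)})'(x_k)$ for the vector agent $i$ compresses. Since $e_0^{(i)}=0$ is forced by the initialization, $2$-periodicity of the full state requires $x_2=x_0$ and $e_2^{(1)}=e_2^{(2)}=0$. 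The simplest way to get this is a first step that imitates the $\CGD$ counterexample and a second step with \emph{exact} compression, $m_1^{(i)}=u_1^{(i)}$, which resets the errors to zero.

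\emph{Step $0$.} Take $x_0=a$, so $u_0^{(1)}=\eta(\mu a-1)$ and $u_0^{(2)}=\eta(\mu a+1)$. Let the oracle answer $m_0^{(1)}=(1-s)u_0^{(1)}$ and $m_0^{(2)}=(1+s)u_0^{(2)}$. Exactly as in the $\CGD$ computation,
\[
x_1 = a - \eta\mu a - \eta s ,
\]
and the errors become
\[
e_1^{(1)} = s\eta(\mu a-1), \qquad e_1^{(2)} = -s\eta(\mu a+1).
\]

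\emph{Step $1$.} Let the oracle return $m_1^{(i)}=u_1^{(i)}$ (zero error, so $t=0$), which gives $e_2^{(i)}=0$. Then
\[
x_2 = x_1 - \tfrac12\bigl(u_1^{(1)}+u_1^{(2)}\bigr).
\]
The error parts sum to $-2s\eta$, and the gradient parts at $x_1$ sum to $2\eta\mu x_1$. Hence
\[
x_2 = x_1(1-\eta\mu) + \eta s .
\]
The cleanest closing condition is $x_1=-a$, i.e.\ $a(2-\eta\mu)=\eta s$. This yields $x_2=-a(1-\eta\mu)+\eta s = a$, as required.

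\emph{Case split.} When $\eta\mu\neq 2$, $a=\eta s/(2-\eta\mu)$ is well defined. Choosing $s=\sqrt{\epsilon}$ gives case~1's $x_0$. Choosing $s=-\sqrt{\epsilon}$, i.e.\ swapping which agent gets the factor $1\pm\sqrt{\epsilon}$, gives case~2's $x_0=-\eta\sqrt{\epsilon}/(2-\eta\mu)$. Both choices are valid for either regime; the sign only matches the stated initial points. When $\eta=2/\mu$, take exact compression at every step, so all errors stay zero and $x_{k+1}=(1-\eta\mu)x_k=-x_k$. This is $2$-periodic for any $x_0$.

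The only delicate point is bookkeeping: the state $(x_2,e_2^{(1)},e_2^{(2)})$ must exactly equal the initial one, including the forced $e_0=0$. The device of an exact second compression handles this. Admissibility is immediate since every response has $|t|\le\sqrt{\epsilon}$.
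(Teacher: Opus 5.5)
Your proposal is correct and follows the paper's proof essentially step for step. The construction is the same: a first step using the $\CGD$ responses $(1\mp\sqrt{\epsilon})$ on the $\EF$ inputs so that $x_1=-x_0$, then an exact-compression step that zeroes both errors and returns $x_2=x_0$, and exact compression throughout when $\eta=2/\mu$. Your sign parameter $s=\pm\sqrt{\epsilon}$ cleanly merges the paper's cases 1 and 2 (its case 2 is your $s=-\sqrt{\epsilon}$), and you are right that the sign of $2-\eta\mu$ is never used.
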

A simple proof of this is provided in \Cref{sec:proof_counterexample_ef}.

\subsection{Related Work}\label{sec:related_work}
Error feedback has a long history in signal processing, where it is used to compensate for
quantization in communication~\cite{cutler1952differential,inose2005unity}.
In distributed optimization,
it dates back to the classical work of \citet{seide_1-bit_2014}. Following its introduction, subsequent
work has studied the behavior of error feedback under various assumptions, including analyses for
specific compression operators such as deterministic Top-$K$ \cite{alistarh_convergence_2018} and
stochastic compressors \cite{wu_error_2018}.

A complementary line of work leverages the contractive properties of compression operators to derive general
convergence guarantees for a broad class of operators. In this setting, it is common to distinguish between
\emph{biased} and \emph{unbiased} compressors \cite{beznosikov_biased_2020}. Convergence rates for error feedback
with contractive compressors have been established for strongly convex functions
\cite{stich_sparsified_2018}, quasi-convex and nonconvex functions \cite{karimireddy_error_2019},
and using stochastic gradients \cite{stich_error-feedback_2020}. Tight rates have also been
established under the same assumptions as this work, for the single-agent regime \cite{bergthomsen2025errorfeedback}.

Error Feedback 21 ($\EFtw$) \cite{richtarik_ef21_2021} is a variant of
error feedback that was designed specifically to handle the heterogeneity of the multi-agent
setting. The same work also reports experiments on logistic regression with a nonconvex regularizer.
$\EFtw$ and its variants have been studied in
many different scenarios, some of which include using stochastic gradients, momentum \cite{fatkhullin2023momentum}
and practical extensions such as bidirectional compression, variance reduction,
and proximal setups \cite{fatkhullin_ef21_2021}.

A related two-parameter method is \(\EC\)~\citep{gao2023econtrol}, which introduces a controllable
error-compensation mechanism combining ideas from \(\EF\) and \(\EFtw\). It is shown to converge
in strongly convex, convex, and nonconvex settings.

\subsection{Assumptions and Notations}\label{sec:assumptions}
The following definition is used throughout this work.
\begin{definition}[Class \(\Fml\)]\label{def:Fml}
  For constants \(0<\mu<L\), denote by \(\Fml\) the set of functions \(h: \mathbb{R}^d \to \mathbb{R}\) that are \(L\)-smooth and
  \(\mu\)-strongly convex. That is, for any $h \in \Fml$, and any \(x, y \in \mathbb{R}^d\), it holds that
  \[
    h(y) \leq h(x) + \langle \nabla h(x), y - x \rangle + \frac{L}{2} \|y - x\|^2,
  \]
  and
  \[
    h(y) \geq h(x) + \langle \nabla h(x), y - x\rangle + \frac{\mu}{2} \|y - x\|^2.
  \]
\end{definition}
Throughout this work, each local function \(f^{(i)}\) is assumed to belong to \(\mathcal{F}_{\mu^{(i)}, L^{(i)}}\).

The symbol \(\SO^\ell\) denotes the set of symmetric matrices, and \(\SO^\ell_+\) denotes the set of positive semidefinite matrices. For any two matrices \(A\in \SO^\ell\) and \(B\in \SO^d\), the Kronecker product is denoted by \(A\otimes B\).
The condition number is denoted by \(\kappa \coloneqq \frac{L}{\mu}\). For any objective function \(f\in \Fml\),
the minimizer is denoted by \(x_\star\coloneqq \arg\min_{x \in \mathbb{R}^d} f\), and its minimum value is denoted by
\(f_\star \coloneqq \min_{x \in \mathbb{R}^d} f(x)\). For each local function \(f^{(i)}\), its unique minimizer is denoted
by \(x_\star^{(i)}\) and its minimum value by \(f_\star^{(i)}\).

\subsection{Methodology}\label{sec:methodology}
The analysis contained in \Cref{sec:main_results} relies on the systematic identification of Lyapunov functions
that provide a tight convergence rate for each method. These Lyapunov functions are \emph{optimal} with respect
to a large class of Lyapunov functions, defined in this section.

\paragraph{Lyapunov functions.}
Let \(\M: (\R^d)^\ell \times \R^d \times \mathcal F \to (\R^d)^\ell \times \R^d \) denote a first-order
method acting on a set of functions \(\mathcal F\) of dimension \(d\), for an integer \(\ell \in \mathbb N\) different \emph{state variables}.
Such a method, given a function \(f\in \mathcal F\), is applied to an initial \textit{state} \(\xi_0 \in (\R^d)^\ell\)
and iterate \(x_0 \in \R^d\), and generates the next state \(\xi_1\) and next iterate \(x_1\). The
\textit{states} represent information summarizing the current
point in the optimization trajectory, including auxiliary quantities that the algorithms may depend on---for
example, error-related quantities in error feedback algorithms.

\begin{definition}[Candidate Lyapunov function]\label{def:candidate_lyapunov}
  A function \(\V: (\mathbb{R}^d)^\ell \times  \mathbb{R}^d \times \mathcal F \to \mathbb{R}\), written \(\V(\xi,x;f)\), is called a candidate Lyapunov function if, for every \(f\in\mathcal F\), it satisfies the following conditions:
  \begin{enumerate}[leftmargin=*, itemsep=0pt]
    \item (Non-negativity) \(\V(\xi, x; f) \geq 0\), for any \(\xi \in (\mathbb{R}^d)^\ell\), \(x \in \mathbb{R}^d\),
    \item (Zero at fixed-point) \(\V(\xi, x; f) = 0\) if and only if {\(x = x_\star\)} and \(\xi = \xi_\star\) for a unique \(\xi_\star \in (\mathbb{R}^d)^\ell\).
    \item (Meaningful lower bound) there exists a positive semidefinite matrix \(P \in \SO^\ell_+\) and a scalar \(p \geq 0\) such that,
      for all \(\xi \in (\mathbb{R}^d)^\ell\) and \(x \in \mathbb{R}^d\),
      \[
        \V(\xi, x; f) \geq (\xi - \xi_\star)^\top (P \otimes I_d) (\xi - \xi_\star) + p (f(x) - f_\star),
      \]
      and \(\tr(P) + p = 1\).
  \end{enumerate}
  In quadratic forms involving \(P \otimes I_d\), we identify
  \(\xi=(\xi^{(1)},\ldots,\xi^{(\ell)})\in(\mathbb R^d)^\ell\) with the stacked vector
  \([(\xi^{(1)})^\top,\ldots,(\xi^{(\ell)})^\top]^\top\in\mathbb R^{\ell d}\), where each block
  \(\xi^{(j)}\in\mathbb R^d\) is one state variable.
\end{definition}

The objective is to find candidate Lyapunov functions \(\V\) satisfying the recurrence
\begin{equation}\label{eq:lyapunov_recurrence}
  \V(\xi_{1}, x_{1}; f) \leq \rho \cdot \V(\xi_{0}, x_{0}; f),
\end{equation}
for some constant \(\rho < 1\), uniformly over \(\mathcal F\). Finding the \emph{optimal}
Lyapunov function within a parameterized class for a method \(\M\) amounts to solving the following problem:
\begin{equation}\label{eq:lyapunov_problem}
  \begin{aligned}
    \rhostar{\M} \coloneq \min_{\mathcal{V}} \quad & \sup_{\substack{f \in \Fml \\ \xi_0, x_0 \\ \V(\xi_0,x_0;f)>0}} \frac{\V(\xi_1, x_1; f)}{\V(\xi_0, x_0; f)} \\
    & \text{s.t. } (\xi_{1}, x_{1}) = \Mfxxi{\xi_0}{x_0}{f}.
  \end{aligned}
\end{equation}
The goal of this work is to identify optimal Lyapunov functions for \(\EF\) and \(\EFtw\) and formally prove that they
achieve the convergence rate defined in \eqref{eq:lyapunov_problem}.
It has been shown that optimal candidate Lyapunov functions can be identified by solving
semidefinite programs (SDPs), yielding numerical convergence guarantees
\citep{taylor2018lyapunov}.
Control-inspired SDP analyses have also been used for decentralized optimization over
communication graphs~\citep{sundararajan2019canonical,sundararajan2020analysis}.

We used numerical tools both to obtain guarantees~\citep{taylor2017performance,goujaud2022pepit} and to search for
Lyapunov functions~\citep{taylor2018lyapunov,upadhyaya2023automated}. Such numerical evidence is not, on its own, a theoretical
proof, but it can reveal stable certificate structures that can then be simplified and proved analytically. In the
multi-agent setting, the optimal Lyapunov coefficients need not be unique, so Appendix~\ref{sec:additional_experiments}
describes the SDP-guided Lyapunov-search heuristics used to infer closed-form structures.

\begin{algorithm}[H]
  \caption{Error Control --- \(\EC\)}
  \label{alg:econtrol}
  \begin{algorithmic}[1]
    \STATE{\bfseries initialization:} \(x_0 \in \R^d;\)
    parameters \(\eta \ge 0\) and \(\gamma > 0;\)
    \(e^{(i)}_0 = 0, d^{(i)}_0 = 0\) for \(i \in [n];\)
    seeds \((\omega_k^{(i)})_{k \ge 0}\) given
    \FOR{\(k = 0, 1, 2, \ldots, K-1\)}
    \STATE Each agent \(i \in [n]\) communicates the compressed message
    \(m^{(i)}_k := \C(\eta e_k^{(i)} + \nabla f^{(i)}(x_k) - d_k^{(i)}; \omega_k^{(i)})\)
    \STATE Each agent \(i \in [n]\) updates \(d^{(i)}_{k+1} \gets d^{(i)}_k + m^{(i)}_k\)
    \STATE Server updates \(x_{k+1} \gets x_k - \gamma \cdot \frac{1}{n} \sum_{i=1}^n d^{(i)}_{k+1}\)
    \STATE Each agent \(i \in [n]\) updates \(e^{(i)}_{k+1} \gets e^{(i)}_k + \nabla f^{(i)}(x_k) - d^{(i)}_{k+1}\)
    \ENDFOR
  \end{algorithmic}
\end{algorithm}
\section{Main Results} \label{sec:main_results}
This section states tight convergence guarantees for \(\EFtw\) under contractive compression, a linear-compressor
corollary with heterogeneous regularity parameters, and corresponding guarantees for \(\EF\) under statistical homogeneity.
In light of the counterexample in \Cref{prop:counterexample_ef}, the result for \(\EF\) requires the additional
assumption that the local objectives \(f^{(i)}\) share the same minimizer.

\subsection{Error Feedback 21}\label{subsec:eftw}
For $\EFtw$, the state is defined as
\begin{equation}\label{eq:ef21_state}
  \xi_k^{\EFtw} \coloneqq
  \begin{bmatrix}
    [x_k, \dots, x_k]^\top \\
    [\nabla f^{(1)}(x_k), \dots, \nabla f^{(n)}(x_k)]^\top \\
    [d_k^{(1)}, \dots, d_k^{(n)}]^\top
  \end{bmatrix}.
\end{equation}
Its fixed-point value is
\[
  \xi_\star^{\EFtw} \coloneqq
  \begin{bmatrix}
    [x_\star, \dots, x_\star]^\top \\
    [\nabla f^{(1)}(x_\star), \dots, \nabla f^{(n)}(x_\star)]^\top \\
    [\nabla f^{(1)}(x_\star), \dots, \nabla f^{(n)}(x_\star)]^\top
  \end{bmatrix}.
\]
Candidate Lyapunov functions may thus include many terms, but the Lyapunov function
given in \Cref{thm:ef21} is relatively simple, and is worst-case optimal under the optimal step
size tuning.
\begin{certified}[cas={https://github.com/DanielBergThomsen/distributed-error-feedback/blob/main/certificates/EF21/ef21_multiworker.wls}, pep={https://github.com/DanielBergThomsen/distributed-error-feedback/blob/main/certificates/EF21/theorem1.ipynb}]{theorem}{theoremEFtw} \label{thm:ef21}
  Let \(\epsilon \in (0, 1)\) and assume that the compression operator $\mathcal C$ satisfies Assumption~\ref{as:compression}.
  Let $f^{(i)} \in \Fml$ for each $i \in [n]$.
  Let the step size be given by
  \begin{equation}\label{eq:opt_step}
    \eta^\star = \left( \frac{2}{L + \mu} \right) \cdot \left( \frac{1 - \sqrt{\epsilon}}{1 + \sqrt{\epsilon}} \right).
  \end{equation}
  Then, the Lyapunov function
  \begin{equation} \label{eq:ef21_lyap}
    \begin{aligned}
      \mathcal V\!\left(\xi^{\EFtw},x;f\right)
      \coloneq & \frac{\sqrt{\epsilon}}{n} \left\|\sum_{i=1}^n \nabla f^{(i)}(x_k)\right\|^2 \\
      & + \sum_{i=1}^n \|\nabla f^{(i)}(x_k) - d^{(i)}_k\|^2
    \end{aligned}
  \end{equation}
  is optimal (i.e., solves \eqref{eq:lyapunov_problem}) and satisfies
  \begin{equation}
    \E_{\omega}\!\left[\mathcal V\!\left(\xi_{k+1}^{\EFtw},x_{k+1};f\right)\right]
    \le \rho_\star \cdot \E_{\omega}\!\left[\mathcal V\!\left(\xi_{k}^{\EFtw},x_{k};f\right)\right]
  \end{equation}
  where the rate is given by
  \begin{equation}
    \rho_\star \coloneq\
    \sqrt{\epsilon} + \left(\frac{1 - \sqrt{\epsilon}}{2}\right) \left(\frac{\kappa - 1}{\kappa + 1}\right)^2 \Psi(\kappa, \epsilon)
    \label{eq:opt_rate}
  \end{equation}
  and
  \begin{equation}
    \Psi(\kappa, \epsilon) \coloneq 1 - \sqrt{\epsilon} + \sqrt{(1 + \sqrt{\epsilon})^2 + \sqrt{\epsilon}\, 16 \frac{\kappa}{(\kappa - 1)^2}}.
  \end{equation}
  Finally, the step size in~\eqref{eq:opt_step} is worst-case optimal for $\EFtw$: within the candidate Lyapunov class based on \(\xi^\EFtw\),
  it achieves the minimal worst-case one-step contraction factor, \(\rho_\star\) in \eqref{eq:opt_rate}. This bound is also multi-step tight,
  meaning that after $k$ iterations the worst-case contraction equals \(\rho_\star^k\).
\end{certified}

A proof appears in \Cref{sec:proof_ef21}. The numerical and symbolic certificates can be accessed by clicking the PEP and CAS badges in the theorem header.

The symmetry across agents implies that the worst-case rate in \Cref{thm:ef21} matches the single-agent
rate: the worst-case instance for \(n=1\) can be replicated across agents. Though the single-agent Lyapunov function
is recovered when setting \(n=1\), the multi-agent Lyapunov function is not a trivial extension of the single-agent
case. One could have expected a weighted sum of the single-agent Lyapunov functions defined in \cite{bergthomsen2025errorfeedback}
to be sufficient, but it is clear that the weighting placed on the individual terms of \eqref{eq:ef21_lyap} do not correspond
to that.

The next corollary extends \Cref{thm:ef21} to heterogeneous regularity parameters (agent-specific smoothness and strong-convexity constants)
under deterministic, additive, positively homogeneous compression, a setup in which \citet{richtarik_ef21_2021} showed an equivalence between $\EF$ and $\EFtw$, under a reparametrization.
\begin{certified}{corollary}{corollaryEFtwLinear} \label{cor:ef21_linear}
  Let \(\epsilon \in [0, 1)\) and assume that the compression operator $\mathcal C$ satisfies
  Assumption~\ref{as:compression} and is deterministic, additive, and positively homogeneous (so
  \(\C(x+y) = \C(x) + \C(y)\) and \(\C(\alpha x) = \alpha \C(x)\) for all \(\alpha \ge 0\)).
  Let \(f^{(i)} \in \mathcal{F}_{\mu^{(i)}, L^{(i)}}\) for each \(i \in [n]\), and define
  \[
    \bar L \coloneqq \frac{1}{n} \sum_{i=1}^n L^{(i)}, \qquad
    \bar \mu \coloneqq \frac{1}{n} \sum_{i=1}^n \mu^{(i)}.
  \]
  \[
    \kappa_\Sigma \coloneqq \frac{\bar L}{\bar \mu}
    = \frac{\sum_{i=1}^n L^{(i)}}{\sum_{i=1}^n \mu^{(i)}}.
  \]
  Let the step size be given by
  \[
    \eta^\star = \left( \frac{2}{\bar L + \bar \mu} \right)
    \cdot \left( \frac{1 - \sqrt{\epsilon}}{1 + \sqrt{\epsilon}} \right).
  \]
  Then the Lyapunov function
  \begin{equation}\label{eq:ef21_linear_lyap}
    \mathcal V_{\mathrm{lin}}\!\left(\xi^{\EFtw},x;f\right)
    \coloneq \sqrt{\epsilon}\,\|\bar g_k\|^2 + \|\bar g_k - \bar d_k\|^2
  \end{equation}
  satisfies the deterministic contraction
  \[
    \mathcal V_{\mathrm{lin}}\!\left(\xi_{k+1}^{\EFtw},x_{k+1};f\right)
    \le \rho_\star \cdot
    \mathcal V_{\mathrm{lin}}\!\left(\xi_{k}^{\EFtw},x_{k};f\right),
  \]
  where \(\bar g_k \coloneqq \frac{1}{n}\sum_{i=1}^n \nabla f^{(i)}(x_k)\),
  \(\bar d_k \coloneqq \frac{1}{n}\sum_{i=1}^n d_k^{(i)}\), and the rate \(\rho_\star\) is given by
  \eqref{eq:opt_rate} with \(\kappa = \kappa_\Sigma\).
\end{certified}

\begin{remark}
  Under the assumptions of \Cref{cor:ef21_linear}, distributed \(\CGD\) also converges, since the
  method reduces to a single-agent compressed iteration on the averaged objective.
  The single-agent result of \citet{bergthomsen2025errorfeedback} then shows that \(\CGD\) outperforms both \(\EF\) and \(\EFtw\) in this regime.
\end{remark}

A proof appears in \Cref{sec:proof_ef21_linear}.
The same step size and rate apply to $\EF$, matching Theorems~1--2 in
\citet{bergthomsen2025errorfeedback} evaluated at \(\kappa_\Sigma\).
	The result follows by observing that the barred variables evolve exactly as a single-agent $\EFtw$
	instance with parameters \((\bar L, \bar \mu)\), and then invoking the $\EF$--$\EFtw$ equivalence.
	Extending tight guarantees beyond this deterministic linear setting appears technically challenging, motivating the next section.

\subsection{Classic Error Feedback under Statistical Homogeneity} \label{subsec:efhomo}
Due to the counterexample given in \Cref{prop:counterexample_ef}, an additional assumption is required
to prove convergence for $\EF$. This is the same assumption as that required for the linear
rate of convergence given in \cite{beznosikov_biased_2020}---namely, that the minimizers of the
local functions are all the same, a condition commonly known as the interpolation regime~\cite{ma2018power,vaswani2019fast}.
\begin{assumption}[Statistical homogeneity]\label{as:fixed_point_homo}
  The local objectives satisfy \emph{statistical homogeneity}, i.e., they share the same minimizer:
  \[
    \xs^{(i)} = \xs^{(j)} \quad \text{for all } i,j \in [n].
  \]
\end{assumption}

For $\EF$, the state is defined as
\begin{equation}
  \xi_k^{\EF} \coloneqq
  \begin{bmatrix}
    x_k \\
    [\nabla f^{(1)}(x_k), \dots, \nabla f^{(n)}(x_k)]^\top \\
    [m_k^{(1)}, \dots, m_k^{(n)}]^\top \\
    [e_k^{(1)}, \dots, e_k^{(n)}]^\top
  \end{bmatrix}.
\end{equation}
Its fixed-point value is
\[
  \xi_\star^{\EF} \coloneqq
  \begin{bmatrix}
    x_\star \\
    [0, \dots, 0]^\top \\
    [0, \dots, 0]^\top \\
    [0, \dots, 0]^\top
  \end{bmatrix},
\]
where \(\nabla f^{(i)}(x_\star)=0\) for all \(i\) under \Cref{as:fixed_point_homo}, so the communication and
error-feedback blocks are also zero at the fixed point.
Similar to \Cref{thm:ef21}, \Cref{thm:ef} provides a relatively simple Lyapunov function which is worst-case
optimal under the optimal step size tuning.

\begin{certified}[cas={https://github.com/DanielBergThomsen/distributed-error-feedback/blob/main/certificates/EF/ef_multiworker.wls}, pep={https://github.com/DanielBergThomsen/distributed-error-feedback/blob/main/certificates/EF/theorem2.ipynb}]{theorem}{theoremEF} \label{thm:ef}
  Let \(\epsilon \in (0, 1)\) and assume that the compression operator $\mathcal C$ satisfies Assumption~\ref{as:compression}.
  Let $f^{(i)} \in \Fml$ for each $i \in [n]$, and assume \Cref{as:fixed_point_homo}.

  Let the step size be given by $\eta^\star$ as defined in~\eqref{eq:opt_step}.
  Then, the Lyapunov function
  \begin{equation} \label{eq:ef_lyap}
    \begin{aligned}
      \mathcal V\!\left(\xi^{\EF},x;f\right)
      \coloneq & \frac{1}{n \sqrt{\epsilon}} \left\| \sum_{i=1}^n e^{(i)}_k \right\|^2  + \sum_{i=1}^n \|x_k - \xs - e_k^{(i)}\|^2.
    \end{aligned}
  \end{equation}
  is optimal and satisfies
  \begin{equation}\label{eq:rate_ef}
    \rhostar{\EF} = \rho_\star,
  \end{equation}
  where $\rho_\star$ is defined in~\eqref{eq:opt_rate}.
  Finally, the step size in~\eqref{eq:opt_step} is worst-case optimal for $\EF$: within the candidate Lyapunov class based on \(\xi^\EF\),
  it achieves the minimal worst-case one-step contraction factor, \(\rho_\star\) in \eqref{eq:opt_rate}. This bound is also multi-step tight,
  meaning that after $k$ iterations the worst-case contraction equals \(\rho_\star^k\).
\end{certified}

A proof appears in \Cref{sec:proof_ef}. Like in the case of $\EFtw$, the single-agent results from \cite{bergthomsen2025errorfeedback}
are recovered for any $n \geq 1$, including the optimal step size and the Lyapunov
function (by setting $n=1$ in \eqref{eq:ef_lyap}).

	Under shared regularity parameters \(f^{(i)} \in \Fml\), the worst-case contraction is
	independent of the number of agents \(n\) for \(\EFtw\) without requiring a shared minimizer,
	and for classic \(\EF\) under the shared-minimizer condition of \Cref{as:fixed_point_homo}. In both cases,
	the class-optimal step sizes and contraction factors coincide with their single-agent
	counterparts. In these distributed regimes, the corresponding single-agent analyses
	therefore characterize the worst-case rates.

\section{Empirical Laws with Heterogeneous Regularity Parameters}\label{sec:empirical_laws}
The results of the previous sections either assume homogeneous regularity parameters (shared parameters \((L, \mu)\))
or rely on linear deterministic compressors to handle heterogeneous regularity parameters. When this is relaxed to allow
general heterogeneous regularity parameters, with agent-specific parameters \(f^{(i)} \in \mathcal{F}_{\mu^{(i)}, L^{(i)}}\),
numerical evidence indicates that the worst-case behavior depends on the local smoothness and
strong-convexity parameters.
Based on extensive numerical experiments, this section states
empirical laws for (i) the optimal step size for both $\EF$ and $\EFtw$, (ii) a corresponding
Lyapunov function for $\EFtw$, (iii) the optimal convergence rate when $n=2$, and
(iv) a two-parameter tuning rule for $\EC$~\citep{gao2023econtrol} in the general case.

The empirical laws below are formulated based on numerical validation using the Performance Estimation
Problem (PEP) framework~\citep{drori2014contributions,taylor2017smooth}.
Appendix~\ref{sec:verification_details} gives the verification details, including exact grid searches and parameter sweeps.
The empirical formulas were tested across a wide range of problem parameters.
\begin{figure*}[t]
  \centering
  \includegraphics[width=.9\textwidth]{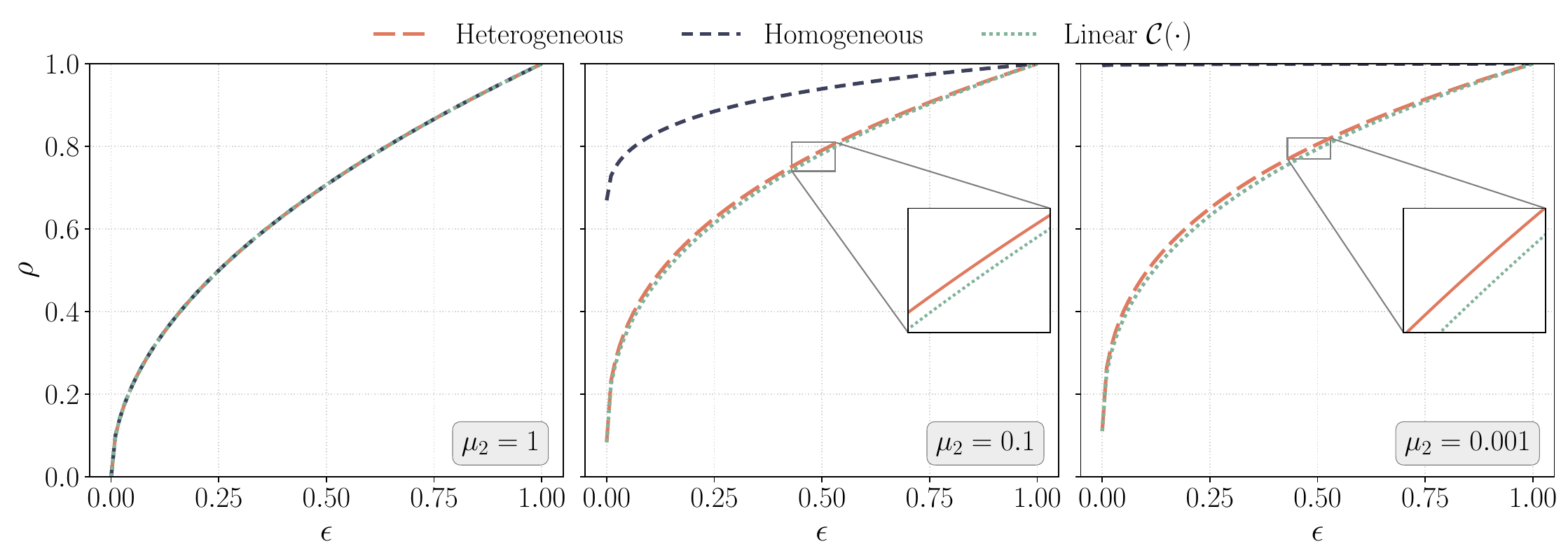}
  \caption{Comparison of the rates predicted by Empirical Law~\ref{law:rate_n2},
    Theorem~\ref{thm:ef21} with worst-case parameters \(\max_i L^{(i)}\) and
    \(\min_i \mu^{(i)}\), and Corollary~\ref{cor:ef21_linear} with averaged
    parameters. The inset in the rightmost panel magnifies the gap between the empirical-law rate
    and the linear-compressor rate. Here \(L^{(1)} = L^{(2)} = 1\), \(\mu^{(1)} = 1\), and
  \(\mu^{(2)} \in \{1, 0.1, 10^{-3}\}\) from left to right.}
  \label{fig:conj_vs_theorem_rates}
\end{figure*}

\begin{certified}[pep={https://github.com/DanielBergThomsen/distributed-error-feedback/blob/main/certificates/empirical_laws/empirical_law_step_size.ipynb}]
  {empiricallaw}{empiricalLawOptStep}[General optimal step size]\label{law:opt_step}
  Consider the setting with heterogeneous regularity parameters, i.e., let
  \(f^{(i)} \in \mathcal{F}_{\mu^{(i)}, L^{(i)}}\).
  The optimal step size for both $\EF$ and $\EFtw$ is given by
  \begin{equation}
    \eta^\star = \left( \frac{2n}{\sum_{i=1}^n (L^{(i)} + \mu^{(i)})} \right)
    \cdot \left( \frac{1 - \sqrt{\epsilon}}{1 + \sqrt{\epsilon}} \right).
  \end{equation}
\end{certified}
This step size reduces to~\eqref{eq:opt_step} when all agents share the same parameters.

Numerical evidence further suggests that the following Lyapunov function is optimal for $\EFtw$ among
Lyapunov functions built from the class defined by \eqref{eq:ef21_state}.
\begin{certified}[pep={https://github.com/DanielBergThomsen/distributed-error-feedback/blob/main/certificates/empirical_laws/empirical_law_lyapunov.ipynb}]
  {empiricallaw}{empiricalLawLyapEF21}[General Lyapunov function]\label{law:lyap_ef21}
  Let $S := \sum_{i=1}^n (L^{(i)} + \mu^{(i)})$ and
  $w_i := \frac{S}{L^{(i)} + \mu^{(i)}}$.
  Let each agent have heterogeneous regularity parameters, i.e.,
  \(f^{(i)} \in \mathcal{F}_{\mu^{(i)}, L^{(i)}}\). The optimal Lyapunov function for $\EFtw$ is
  given by
  \begin{equation}
    \begin{aligned}
      \mathcal V\!\left(\xi^{\EFtw},x;f\right) \coloneq
      & \frac{1}{n} \sum_{i=1}^n w_i \|\nabla f^{(i)}(x_k) - d^{(i)}_k\|^2 \\
      & + \frac{\sqrt{\epsilon}}{n} \left\|\sum_{i=1}^n \nabla f^{(i)}(x_k)\right\|^2
    \end{aligned}
  \end{equation}
\end{certified}
This family of Lyapunov functions reduces to~\eqref{eq:ef21_lyap} when all agents share the same parameters; in the heterogeneous case,
it weights each local estimator error by \(\sum_{j=1}^n (L^{(j)}+\mu^{(j)})/(L^{(i)}+\mu^{(i)})\).

Finally, numerical evidence suggests an explicit characterization of the
agent-sensitive convergence rate for both $\EF$ and $\EFtw$ when $n=2$:
\begin{certified}[pep={https://github.com/DanielBergThomsen/distributed-error-feedback/blob/main/certificates/empirical_laws/empirical_law_n2_rate.ipynb}]
  {empiricallaw}{empiricalLawRateN2}[General rate for $n=2$]\label{law:rate_n2}
  Let $n=2$ and let the agents have heterogeneous regularity parameters, i.e.,
  \(f^{(i)} \in \mathcal{F}_{\mu^{(i)}, L^{(i)}}\).
  At the step size \(\eta=\eta^\star\) from \Cref{law:opt_step}, the optimal
  convergence rate of both $\EF$ and $\EFtw$ is given by the largest real root of the polynomial
  \begin{equation}
    \label{eq:poly_n2}
    \begin{aligned}
      Q(\rho) ={}& \rho^3 - \left[s(2+s) + r(s)(s K_1 + K_2)\right] \rho^2 \\
      &+ s^2 \left[1+2s + r(s)(K_1 + s K_2)\right] \rho - s^4
    \end{aligned}
  \end{equation}
  where $s \coloneqq \sqrt{\epsilon}$, $r(s) \coloneqq \frac{(1-s)^2}{1+s}$, and the constants $K_1, K_2$ are defined as
  \begin{equation}
    K_1 \coloneqq \frac{\Delta_2^2 \Sigma_1 + \Delta_1^2 \Sigma_2}{\Sigma_1 \Sigma_2 (\Sigma_1 + \Sigma_2)},
    \qquad
    K_2 \coloneqq \frac{(\Delta_1 + \Delta_2)^2}{(\Sigma_1 + \Sigma_2)^2},
  \end{equation}
  with $\Sigma_i \coloneqq L^{(i)} + \mu^{(i)}$ and $\Delta_i \coloneqq L^{(i)} - \mu^{(i)}$ for $i \in \{1, 2\}$.
\end{certified}

The derivation of this polynomial and a visualization of its roots are provided in
Appendix~\ref{sec:additional_experiments}; see \Cref{fig:poly_bifurcation}.
\Cref{fig:conj_vs_theorem_rates} compares the empirical-law rate with the
homogeneous worst-case and linear-compressor rates. A further comparison with the
distributed $\EFtw$ rate of \citet{richtarik_ef21_2021} is provided in
Appendix~\ref{sec:additional_experiments}; see \Cref{fig:richtarik_comparison}.

Numerical evidence further suggests a simple tuning rule for the two-parameter $\EC$ method
~\citep{gao2023econtrol}, where
$\gamma$ denotes the model step size and $\eta$ denotes the coefficient on the error term.
\begin{certified}[pep={https://github.com/DanielBergThomsen/distributed-error-feedback/blob/main/certificates/empirical_laws/empirical_law_econtrol_tuning.ipynb}]
  {empiricallaw}{empiricalLawEControl}[\(\EC\) tuning]\label{law:econtrol_tuning}
  Consider the case of heterogeneous regularity parameters, i.e., \(f^{(i)} \in \mathcal{F}_{\mu^{(i)}, L^{(i)}}\).
  The empirically optimal tuning for $\EC$ satisfies
  \begin{equation}
    \begin{aligned}
      \eta^\star_{\EC} &= 0, \\
      \gamma^\star_{\EC} &= \left( \frac{2n}{\sum_{i=1}^n (L^{(i)} + \mu^{(i)})} \right)
      \cdot \left( \frac{1 - \sqrt{\epsilon}}{1 + \sqrt{\epsilon}} \right),
    \end{aligned}
  \end{equation}
  which coincides with the optimal step size in \Cref{law:opt_step}.
\end{certified}
When \(\eta = 0\) in \(\EC\), the method is equivalent to \(\EFtw\) under a reordering of the
corresponding parameter \(d_k^{(i)}\), and by instead initializing \(d_0^{(i)} = 0\). Consequently,
the algorithm inherits the same worst-case convergence rates as \(\EFtw\).

\section{Conclusion}
This paper provides tight worst-case analyses for distributed error-feedback algorithms in the
multi-agent setting.
Under homogeneous regularity parameters (the same smoothness and strong-convexity constants across agents), the optimal
step sizes and contraction factors for \(\EF\) and \(\EFtw\) are independent of the number of
workers \(n\) and coincide with the corresponding single-agent guarantees.

For \(\EFtw\) with contractive compression, the analysis allows statistical heterogeneity
and yields an optimal Lyapunov function together with a worst-case optimal step size and
contraction factor.
In contrast, a simple counterexample demonstrates that classic \(\EF\) fails without additional
assumptions when local minimizers differ.
Under statistical homogeneity (shared minimizer), we provide a tight analysis for \(\EF\) and
recover the same optimal rate as \(\EFtw\); for deterministic linear compressors we further
obtain sharp guarantees with heterogeneous regularity parameters via an averaged-parameter reduction.

The Lyapunov functions proposed in this analysis could provide insights into the behavior of
these methods under different sets of assumptions and, given the close relationship between
\(\EF\) and \(\EFtw\), could inspire the construction of Lyapunov functions for other
error-compensated algorithms.
A natural next step is to complement these method-specific tight guarantees with
method-agnostic lower bounds, for example lower bounds that depend explicitly on the
communication budget or compression level and apply across broader classes of
compressed distributed algorithms.
These results demonstrate that, when agents share the same smoothness and strong-convexity parameters,
the single-agent setting captures the core worst-case behavior and optimal parameter settings observed in larger systems,
thereby unifying existing theory.

Finally, a number of empirical laws have been formulated about the behavior of \(\EFtw\) with
general heterogeneous regularity parameters, with strong numerical evidence supporting their validity.
Establishing these laws rigorously is left for future work; caution is warranted, as the
dependence of the optimal rate on heterogeneous parameters already appears technically intricate,
even for \(n=2\).

\section*{Acknowledgments}
D.~Berg Thomsen and  A.~Taylor are supported by the European Union (ERC grant CASPER 101162889).
The work of A. Dieuleveut is partly supported by ANR-19-CHIA-0002-01/chaire SCAI, and
Hi!Paris FLAG project, PEPR Redeem. The French government also partly funded this work under
the management of Agence Nationale de la Recherche as part of the ``France 2030'' program,
references ANR-23-IACL-0008 "PR[AI]RIE-PSAI", ANR-23-PEIA-005 (REDEEM project) and ANR-23-IACL-0005.

\section*{Impact Statement}
This paper presents work whose goal is to advance the field of Machine
Learning. There are many potential societal consequences of this work, none
of which the authors feel must be specifically highlighted here.

\bibliography{bib}
\bibliographystyle{icml2026}

\clearpage
\appendix
\onecolumn
\section*{Organization of the Appendix}
\newcommand{\customtoc}[2]{
\item[\ref{#1}] #2 \dotfill \pageref{#1}
}
\begin{itemize}[leftmargin=2em]
    \customtoc{sec:proofs}{Proofs}
    \customtoc{sec:additional_experiments}{Additional Details for the Empirical Laws}
    \customtoc{sec:verification_details}{Verification Details}
\end{itemize}

This appendix provides additional content and details complementing the paper. In particular, \Cref{sec:proofs} provides the complete missing proofs for the main results of the paper. \Cref{sec:additional_experiments} collects additional empirical-law details, including the Lyapunov discovery pipeline, the derivation and root behavior of \Cref{law:rate_n2}, and a comparison with an existing distributed \(\EFtw\) analysis. Finally, \Cref{sec:verification_details} details the verification methodology used to validate the empirical laws.

\newpage
\section{Proofs}
\label{sec:proofs}

\subsection{Proof of \Cref{prop:counterexample_ef}}
\propositionCounterexampleEF*
\label[appendix]{sec:proof_counterexample_ef}
\begin{proof}
  Begin by dealing with the first case, where $\eta < \frac{2}{\mu}$. Use the same
  initialization as in \eqref{eq:x0_cgd}. Set the compression oracle responses on the \(\EF\) inputs
  \(e_0^{(i)}+\eta(f^{(i)})'(x_0)\) to be the scaled analogues of the responses in \eqref{eq:x1_cgd}. Since the initial
  error terms are all zero, the first step will be the same as in \eqref{eq:x1_cgd}. After the
  first round of communication, the error terms are given by
  \begin{equation}
    \begin{aligned}
      e^{(1)}_1 &= e^{(1)}_0 + \eta (f^{(1)})'(x_0) - \C(e^{(1)}_0 + \eta (f^{(1)})'(x_0); \omega_0^{(1)}) = \sqrt{\epsilon} \eta (f^{(1)})'(x_0), \\
      e^{(2)}_1 &= e^{(2)}_0 + \eta (f^{(2)})'(x_0) - \C(e^{(2)}_0 + \eta (f^{(2)})'(x_0); \omega_0^{(2)}) = -\sqrt{\epsilon} \eta (f^{(2)})'(x_0).
    \end{aligned}
  \end{equation}
  Now, set the compression oracles to return the full \(\EF\) inputs at $x_1$ without compression,
  i.e., $m^{(i)}_1 = e_1^{(i)} + \eta(f^{(i)})'(x_1)$ for \(i\in\{1,2\}\). This will result in the following updates
  to the error terms:
  \begin{align*}
    e^{(1)}_2 &= e^{(1)}_1 + \eta (f^{(1)})'(x_1) - \C \left(e^{(1)}_1 + \eta (f^{(1)})'(x_1); \omega_1^{(1)} \right) = 0, \\
    e^{(2)}_2 &= e^{(2)}_1 + \eta (f^{(2)})'(x_1) - \C \left(e^{(2)}_1 + \eta (f^{(2)})'(x_1); \omega_1^{(2)} \right) = 0,
  \end{align*}
  meaning that the error terms are zero after the second step. This also results in the update
  \begin{align*}
    x_2 &= x_1 - \frac{1}{2} \left[e^{(1)}_1 + \eta (f^{(1)})'(x_1) + e^{(2)}_1 + \eta (f^{(2)})'(x_1) \right] \\
    &= x_1 - \frac{\eta}{2} \left[ \sqrt{\epsilon}(\mu x_0 - 1) + (\mu x_1 - 1) - \sqrt{\epsilon} (\mu x_0 + 1) + (\mu x_1 + 1) \right] \\
    &= x_1 - \eta \left(\mu x_1 - \sqrt{\epsilon} \right) = x_0, \\
  \end{align*}
  where the last step follows from \eqref{eq:x0_cgd} and some basic algebra. Since the iterate is back
  at the starting point, with the error terms being zero, the cycle is complete.

  The second case where $\eta > \frac{2}{\mu}$ is given by the converse argument, with the
  starting point $x_0 = -\frac{\eta \sqrt{\epsilon}}{2 - \eta \mu}$ and compression oracles
  that on the first step give the following responses on the \(\EF\) inputs (recall \(e_0^{(i)}=0\)):
  \begin{equation}
    \C(\eta(f^{(1)})'(x_0); \omega_0^{(1)}) = \eta(1 + \sqrt{\epsilon}) (f^{(1)})'(x_0), \qquad
    \C(\eta(f^{(2)})'(x_0); \omega_0^{(2)}) = \eta(1 - \sqrt{\epsilon}) (f^{(2)})'(x_0).
  \end{equation}

  The third case is given by simply considering what would happen if all the compression
  oracles just gave the true gradient updates during all communication rounds. Then there
  would be no compression nor error feedback, making the iteration equivalent to
  distributed gradient descent. The cycle follows from the following computations:
  \begin{align*}
    x_1 &= x_0 - \frac{\eta}{2} \left[(f^{(1)})'(x_0) + (f^{(2)})'(x_0) \right] \\
    &= x_0 - \eta \mu x_0 \\
    &= -x_0,
  \end{align*}
  and,
  \begin{align*}
    x_2 &= x_1 - \frac{\eta}{2} \left[(f^{(1)})'(x_1) + (f^{(2)})'(x_1) \right] \\
    &= x_1 - \eta \mu x_1 \\
    &= -x_1.
  \end{align*}
\end{proof}

\subsection{Proof of \Cref{thm:ef21}}
\theoremEFtw*
\label[appendix]{sec:proof_ef21}
\begin{proof}
  The proof of the announced convergence rate follows. Denote \(g_k^{(i)} \coloneqq \nabla f^{(i)}(x_k)\),
  so that Algorithm~\ref{alg:ef21} writes as
  \[
    x_{k+1} = x_k - \eta \cdot \frac{1}{n} \sum_{i=1}^n d_k^{(i)},
    \qquad
    d_{k+1}^{(i)} = d_k^{(i)} + \C(g_{k+1}^{(i)} - d_k^{(i)}; \omega_k^{(i)}).
  \]
  Consider the following inequalities, and associate with each of them the assigned multiplier:
  \begin{equation*}
    \begin{aligned}
      \begin{aligned}
        I_{\F_{\mu, L}}^{(i,1)} \coloneqq\ &
        f^{(i)}(x_k) - f^{(i)}(x_{k+1})
        + \frac{\|g_{k+1}^{(i)} - g_k^{(i)}\|^2}{2L}
        + \langle g_k^{(i)}, x_{k+1} - x_k\rangle \\
        &\ + \frac{\mu}{2(1 - \mu / L)}\left\|x_k - x_{k+1} - \frac{1}{L}\left(g_k^{(i)} - g_{k+1}^{(i)}\right)\right\|^2 \le 0,
      \end{aligned}
      & \quad & : \lambda_{\EFtw} \\
      \begin{aligned}
        I_{\F_{\mu, L}}^{(i,2)} \coloneqq\ &
        f^{(i)}(x_{k+1}) - f^{(i)}(x_k)
        + \frac{\|g_k^{(i)} - g_{k+1}^{(i)}\|^2}{2L}
        + \langle g_{k+1}^{(i)}, x_k - x_{k+1}\rangle \\
        &\ + \frac{\mu}{2(1 - \mu / L)}\left\|x_{k+1} - x_k - \frac{1}{L}\left(g_{k+1}^{(i)} - g_k^{(i)}\right)\right\|^2 \le 0,
      \end{aligned}
      & \quad & : \lambda_{\EFtw} \\
      I_{\C}^{(i)} \coloneqq \E_{\omega_k^{(i)}}\!\left[\|g_{k+1}^{(i)} - d_k^{(i)} - \C(g_{k+1}^{(i)} - d_k^{(i)}; \omega_k^{(i)})\|^2\right] - \epsilon\,\E_{\omega_k^{(i)}}\!\left[\|g_{k+1}^{(i)} - d_k^{(i)}\|^2\right] \leq 0,
      & \quad & : \nu_{\EFtw},
    \end{aligned}
  \end{equation*}
  where \(i \in [n]\), \(\nu_{\EFtw} \coloneqq 1\), and where \(\lambda_{\EFtw}\) is defined as
  \begin{equation}\label{eq:lambda_ef21_multiagent}
    \lambda_{\EFtw} \coloneqq \frac{\sqrt{\epsilon}}{\eta^\star (L + \mu)} \left[
      (1 - \sqrt{\epsilon})(L - \mu) + (1 + \sqrt{\epsilon}) \sqrt{(L - \mu)^2 + \frac{16 L \mu \sqrt{\epsilon}}{(1 + \sqrt{\epsilon})^2}}
    \right].
  \end{equation}

  Summing the inequalities with their multipliers, the following algebraic identity holds:
  \begin{equation}\label{eq:ef21_multiagent_sos}
    \lambda_{\EFtw} \sum_{i=1}^n (I_{\F_{\mu, L}}^{(i,1)} + I_{\F_{\mu, L}}^{(i,2)}) + \nu_{\EFtw} \sum_{i=1}^n I_{\C}^{(i)} = \E_{\omega_k}\left[\mathcal{V}(\xi_{k+1})\right] - \rho \mathcal{V}(\xi_k) + S,
  \end{equation}
  where the residual \(S\) is a sum of squares defined as
  \[
    S \coloneqq n a S_{\text{mean}} + c S_{\text{var}} + (\rho - \epsilon) S_{\text{mix}},
  \]
  with the components
  \begin{align*}
    S_{\text{mean}} &\coloneqq \left\| \bar d_k + \frac{1}{a} \left[ (\epsilon + b) \bar g_{k+1} - (\rho + b) \bar g_k \right] \right\|^2, \\
    S_{\text{var}} &\coloneqq \sum_{i=1}^n \left\| (g_{k+1}^{(i)} - \bar g_{k+1}) - (g_k^{(i)} - \bar g_k) \right\|^2, \\
    S_{\text{mix}} &\coloneqq \sum_{i=1}^n \left\| (d_k^{(i)} - \bar d_k) + \frac{\epsilon}{\rho - \epsilon} (g_{k+1}^{(i)} - \bar g_{k+1}) - \frac{\rho}{\rho - \epsilon} (g_k^{(i)} - \bar g_k) \right\|^2,
  \end{align*}
  where \(\bar g \coloneqq \frac{1}{n} \sum_{i=1}^n g^{(i)}\) and \(\bar d \coloneqq \frac{1}{n} \sum_{i=1}^n d^{(i)}\) denote the averages, and the coefficients are given by
  \[
    a \coloneqq \rho - \epsilon + \lambda_{\EFtw} \eta^2 \frac{L \mu}{L - \mu}, \quad
    b \coloneqq \frac{\eta \lambda_{\EFtw}}{2} \cdot \frac{L + \mu}{L - \mu}, \quad
    c \coloneqq \frac{\lambda_{\EFtw}}{L - \mu} - \frac{\epsilon \rho}{\rho - \epsilon}.
  \]
  Note that the positivity of \(a\) is guaranteed for \(\eta^\star\), as noted in the
  single-agent proof in~\cite{bergthomsen2025errorfeedback}.
  To see that \(c>0\), set
  \(t \coloneqq \sqrt{(1+\sqrt{\epsilon})^2 + \tfrac{16 L \mu \sqrt{\epsilon}}{(L-\mu)^2}}\) and
  \(u \coloneqq 1-\sqrt{\epsilon}+t\) (so \(u\ge2\)). Then
  \[
    (L-\mu)(\rho-\epsilon)c
    = a_2 (u-2)^2 + a_1 (u-2) + \frac{\sqrt{\epsilon}}{(L+\mu)^2}\!\left[(1+\sqrt{\epsilon}+\epsilon)(L-\mu)^2 + 4\sqrt{\epsilon}\,L\mu\right],
  \]
  where
  \[
    \begin{aligned}
      a_2 &\coloneqq \frac{\sqrt{\epsilon}(1+\sqrt{\epsilon})}{4}
        \left(\frac{L-\mu}{L+\mu}\right)^2, \\
      a_1 &\coloneqq \sqrt{\epsilon}(1+\sqrt{\epsilon})
        \left(\frac{L-\mu}{L+\mu}\right)^2
        + \frac{\epsilon}{2}\left(1+\sqrt{\epsilon}
        -(1-\sqrt{\epsilon})\left(\frac{L-\mu}{L+\mu}\right)^2\right).
    \end{aligned}
  \]
  Both coefficients are positive for \(0<\epsilon<1\) and \(0<\mu<L\), hence \(c>0\).

  Since \(\lambda_{\EFtw} \ge 0\), the weighted sum of inequalities (LHS of~\eqref{eq:ef21_multiagent_sos}) is
  nonpositive. The statement now follows by plugging in \(\eta=\eta^\star\) and \(\rho=\rho_\star\) and
  checking that all coefficients in~\eqref{eq:ef21_multiagent_sos} are nonnegative.

  To show tightness of the bound, consider the case where all agents have the
  same objective function \(f^{(i)} = f\). Initialize
  \(d_0^{(i)} = d_0 = \C(\nabla f(x_0); \omega_0)\) and assume identical compression
  realizations. Then \(d_k^{(i)} = d_k\) for all \(k\), and the algorithm
  updates match the single-agent \(\EFtw\). The single-agent tight lower
  bound thus limits the multi-agent performance. Furthermore, since this
  scenario is a worst-case instance, the optimal step size for the single-agent
  setting is also worst-case optimal for the multi-agent setting. In
  particular, since the dynamics of the identical-functions case are invariant
  to the number of agents \(n\), the worst-case optimal step size must also be
  independent of \(n\).

  The optimality and multi-step tightness of the Lyapunov function follow from the
  theory of linear dynamical systems. For the methods and function classes
  considered, the worst-case contraction is governed by the spectral radius of the
  iteration matrix on quadratic instances. The class of candidate Lyapunov functions matches this
  spectral behavior, ensuring that the single-step analysis is optimal and
  remains tight over multiple iterations (i.e., \(\V_k \leq \rho^k \V_0\) is achievable).
\end{proof}

\subsection{Proof of \Cref{cor:ef21_linear}}
\corollaryEFtwLinear*
\label[appendix]{sec:proof_ef21_linear}
\begin{proof}
  Recall \(f(x) \coloneq \frac{1}{n}\sum_{i=1}^n f^{(i)}(x)\), and denote
  \(\bar g_k \coloneq \nabla f(x_k) = \frac{1}{n}\sum_{i=1}^n \nabla f^{(i)}(x_k)\).
  Since each \(f^{(i)}\) is \(L^{(i)}\)-smooth and \(\mu^{(i)}\)-strongly convex,
  \(f\) is \(\bar L\)-smooth and \(\bar \mu\)-strongly convex with
  \(\bar L = \frac{1}{n}\sum_i L^{(i)}\) and \(\bar \mu = \frac{1}{n}\sum_i \mu^{(i)}\).
  Consequently, the standard interpolation (cocoercivity) inequalities for
  \(\mathcal{F}_{\bar\mu,\bar L}\) apply directly to \(\bar g_k\) and \(\bar g_{k+1}\).

  The averaged dynamics satisfy
  \[
    x_{k+1} = x_k - \eta\, \bar d_k,
  \]
  and, due to additivity and positive homogeneity of \(\C\),
  \begin{equation}
    \begin{aligned}
      \bar d_{k+1}
      &= \frac{1}{n}\sum_{i=1}^n d_{k+1}^{(i)}
      = \frac{1}{n}\sum_{i=1}^n \left(d_k^{(i)} + \C\!\left(g_{k+1}^{(i)} - d_k^{(i)}\right)\right) \\
      &= \bar d_k + \C\!\left(\frac{1}{n}\sum_{i=1}^n \left(g_{k+1}^{(i)} - d_k^{(i)}\right)\right)
      = \bar d_k + \C(\bar g_{k+1} - \bar d_k).
    \end{aligned}
  \end{equation}
  Therefore, the barred variables evolve as a single-agent \(\EFtw\) instance
  on \(f\) with parameters \((\bar \mu,\bar L)\).
  The conclusion follows by applying Theorems~1--2 in \citet{bergthomsen2025errorfeedback} to this
  averaged single-agent instance. The optimal step size and rate are obtained by substituting
  \(\kappa_\Sigma = \bar L / \bar \mu = \sum_i L^{(i)} / \sum_i \mu^{(i)}\) into \eqref{eq:opt_rate}.
  Theorem~3 in \citet{richtarik_ef21_2021} establishes that $\EF$ and $\EFtw$ are equivalent under
  deterministic, additive, positively homogeneous compressors, so the same rate applies to $\EF$.
\end{proof}

\subsection{Proof of \Cref{thm:ef}}
\theoremEF*
\label[appendix]{sec:proof_ef}
\begin{proof}
  The proof of the announced convergence rate follows. Algorithm~\ref{alg:ef} can be rewritten as
  \[
    x_{k+1} = x_k - \frac{1}{n} \sum_{i=1}^n m_k^{(i)},
    \qquad
    e_{k+1}^{(i)} = e_k^{(i)} + \eta \nabla f^{(i)}(x_k) - m_k^{(i)},
  \]
  where \(g_k^{(i)} \coloneqq \nabla f^{(i)}(x_k)\),
  \(u_k^{(i)} \coloneqq e_k^{(i)} + \eta g_k^{(i)}\), and
  \(m_k^{(i)} = \C(u_k^{(i)}; \omega_k^{(i)})\).
  For \(z\in\{e,g,u,m\}\), write \(\bar z_k \coloneqq \frac1n\sum_{i=1}^n z_k^{(i)}\).
  Consider the following inequalities:
  \begin{equation*}
    \begin{aligned}
      \begin{aligned}
        I_{\F_{\mu, L}}^{(i,1)} \coloneqq\ &
        f^{(i)}(x_k) - f^{(i)}(x^\star)
        - \langle \nabla f^{(i)}(x_k), x_k - x^\star \rangle
        + \frac{1}{2L} \|\nabla f^{(i)}(x_k)\|^2 \\
        &\ + \frac{\mu}{2(1 - \mu / L)} \left\|x_k - x^\star - \frac{1}{L} \nabla f^{(i)}(x_k)\right\|^2 \leq 0,
      \end{aligned}
      & \quad & : \lambda_{\EF} \\
      \begin{aligned}
        I_{\F_{\mu, L}}^{(i,2)} \coloneqq\ &
        f^{(i)}(x^\star) - f^{(i)}(x_k)
        + \frac{1}{2L} \|\nabla f^{(i)}(x_k)\|^2 \\
        &\ + \frac{\mu}{2(1 - \mu / L)} \left\|x_k - x^\star - \frac{1}{L} \nabla f^{(i)}(x_k)\right\|^2 \leq 0,
      \end{aligned}
      & \quad & : \lambda_{\EF} \\
      \begin{aligned}
        I_{\C}^{(i)} \coloneqq\ &
        (1-\epsilon)\|u_k^{(i)}\|^2
        - 2\left\langle u_k^{(i)}, \E_{\omega_k^{(i)}}\!\left[\C(u_k^{(i)}; \omega_k^{(i)})\right]\right\rangle \\
        &\ + \E_{\omega_k^{(i)}}\!\left[\|\C(u_k^{(i)}; \omega_k^{(i)})\|^2\right] \le 0,
      \end{aligned}
      & \quad & : \nu_{\EF},
    \end{aligned}
  \end{equation*}
  where \(i \in [n]\), \(\nu_{\EF} \coloneqq 1/\sqrt{\epsilon}\), and \(\lambda_{\EF}\) is defined as
  \begin{equation}\label{eq:lambda_ef_multiagent}
    \lambda_{\EF} \coloneqq \frac{\eta^\star}{L + \mu} \left[
      (1 - \sqrt{\epsilon})(L - \mu) + (1 + \sqrt{\epsilon}) \sqrt{(L - \mu)^2 + \frac{16 L \mu \sqrt{\epsilon}}{(1 + \sqrt{\epsilon})^2}}
    \right].
  \end{equation}
  Set \(\rho \coloneqq \rho_\star\) and
  \(a \coloneqq (\rho-\sqrt{\epsilon})(1+\sqrt{\epsilon})/\sqrt{\epsilon}\).
  Summing these inequalities with their multipliers yields the algebraic identity:
  \begin{equation}\label{eq:ef_multiagent_sos}
    \lambda_{\EF} \sum_{i=1}^n (I_{\F_{\mu, L}}^{(i,1)} + I_{\F_{\mu, L}}^{(i,2)}) + \nu_{\EF} \sum_{i=1}^n I_{\C}^{(i)} = \E_{\omega_k}\left[\mathcal{V}(\xi_{k+1})\right] - \rho \mathcal{V}(\xi_k) + S,
  \end{equation}
  where the residual \(S\) is given by
  \[
    S \coloneqq n a S_{\text{mean}} + c S_{\text{var}} + (\rho - \sqrt{\epsilon}) S_{\text{mix}} + (\nu_{\EF} - 1) S_{\text{mix-comp}},
  \]
  with the components
  \begin{align*}
    S_{\text{mean}} &\coloneqq \left\| \bar e_k - \frac{\rho-1}{a} (x_k - x^\star) + \frac{2(\sqrt{\epsilon}-1)}{a(L+\mu)} \bar g_k \right\|^2, \\
    S_{\text{var}} &\coloneqq \sum_{i=1}^n \left\| g_k^{(i)} - \bar g_k \right\|^2, \\
    S_{\text{mix}} &\coloneqq \sum_{i=1}^n \left\| (e_k^{(i)} - \bar e_k) - \frac{\sqrt{\epsilon}\, \eta}{\rho - \sqrt{\epsilon}} (g_k^{(i)} - \bar g_k) \right\|^2, \\
    S_{\text{mix-comp}} &\coloneqq \sum_{i=1}^n \E_{\omega_k}\!\left[
      \left\| (u_k^{(i)}-\bar u_k) - (m_k^{(i)}-\bar m_k) \right\|^2
    \right].
  \end{align*}
  As in the single-agent proof, the positivity of
  \(\rho - \sqrt{\epsilon}\) (hence \(a\)) for \(\eta^\star\) follows directly from the single-agent analysis.
  Let \(s \coloneqq \sqrt{\epsilon}\) and
  \(t \coloneqq \sqrt{(L-\mu)^2 + \tfrac{16 L \mu s}{(1+s)^2}}\). This also gives
  \(\nu_{\EF}-1 = \tfrac{1-s}{s} > 0\), and
  \[
    c = \frac{2(1-s)\big((1-s)(L-\mu)+(1+s)t\big)}{(1+s)^2 (L-\mu) (L+\mu)^2} > 0.
  \]

  To show tightness of the bound, consider the case where all agents have the
  same objective function \(f^{(i)} = f\). With initialization
  \(e_0^{(i)} = e_0 = 0\) and identical compression realizations,
  \(e_k^{(i)} = e_k\) for all \(k\), and the algorithm updates match the
  single-agent \(\EF\). The single-agent tight lower bound thus limits the
  multi-agent performance. Furthermore, since this scenario is a worst-case
  instance, the optimal step size for the single-agent setting is also
  worst-case optimal for the multi-agent setting. In particular, since the
  dynamics of the identical-functions case are invariant to the number of agents
  \(n\), the worst-case optimal step size must also be independent of \(n\).

  The optimality of the Lyapunov function and tightness over multiple iterations follow from the same argument as in the proof of \Cref{thm:ef21}.
\end{proof}

\newpage
\section{Additional Details for the Empirical Laws}
\label{sec:additional_experiments}

\subsection{Derivation of the \texorpdfstring{\(n=2\)}{n=2} rate law}
The cubic polynomial in \Cref{law:rate_n2} was identified through an iterative workflow
combining numerical PEP experiments with symbolic algebra (using Mathematica). We started from 
the empirical optimal step size and Lyapunov function in Empirical Laws~\ref{law:opt_step}
and~\ref{law:lyap_ef21}, and then
formed the dual problem of the corresponding performance estimation problem using that
Lyapunov function as the error metric.

Large parameter sweeps showed that the optimal dual variables (Lagrange multipliers) differ
substantially from the single-agent case. The associated rate is also slightly different.
This ruled out a direct transfer of the
single-agent proof structure and required searching for new closed-form relations. We tried
several structural constraints, reparameterizations, and symbolic-regression heuristics to
infer explicit formulas for the multipliers, without success.

Subsequent experiments indicated that the dual linear matrix inequality (LMI) is
rank-deficient in the relevant regimes. Exploiting this structure revealed relations between
the multipliers and the convergence rate, which eventually led to the cubic characterization
in \Cref{law:rate_n2} for the \(n=2\) case.

Because the LMI is highly intricate, this discovery required multiple rounds of
reparameterization and simplification rather than a single linear derivation.

\subsection{Lyapunov-function discovery pipeline}
The Lyapunov structures used in these results were found through a numerical
certificate-search workflow. Starting with the two-worker case, we solved the
SDP-based Lyapunov search using the log-det heuristic~\citep{fazel2003log} and manually introduced
sparsification constraints to identify a minimal structure. Since the remaining
coefficients were not unique, we fixed a normalization and probed each coefficient's
feasible range by alternately maximizing and minimizing it while leaving the other
coefficients free. These ranges revealed a stable limiting pattern as
\(\epsilon \to 1\), matching the simple coefficients used in the final Lyapunov
functions. The resulting candidates were then checked numerically across additional
parameter sweeps and larger worker counts.

\subsection{Roots of the \texorpdfstring{\(n=2\)}{n=2} rate polynomial}
Closed-form expressions for the roots exist in principle, but they are too cumbersome to be
informative in the present context and would significantly increase the length of the paper.
\Cref{fig:poly_bifurcation} contains plots of the roots of the cubic polynomial defined 
in \Cref{law:rate_n2}. The bifurcation plots highlight that the roots vary nontrivially 
with \(\epsilon\) and heterogeneity, so there is no single root expression that can be 
uniformly exploited to simplify the cubic (e.g., by direct factorization).
In particular, no branch yields a compact expression that remains interpretable across
all heterogeneity settings.
\begin{figure*}[t]
  \centering
  \includegraphics[width=\textwidth]{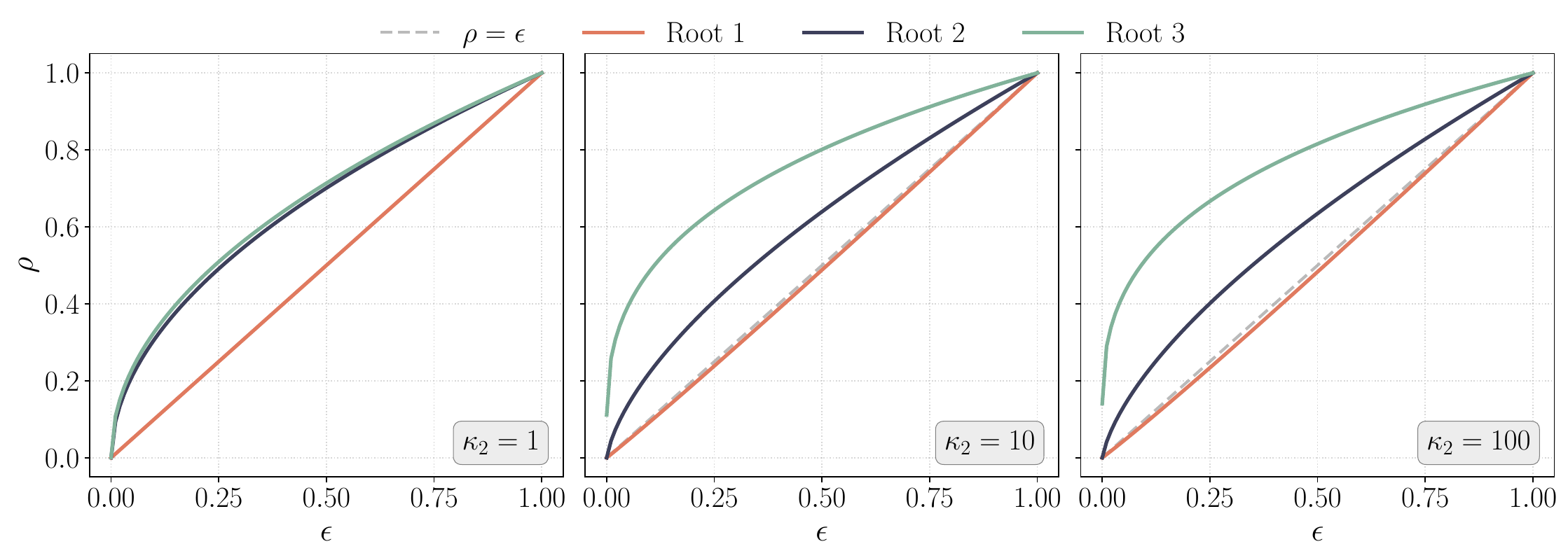}
  \caption{Bifurcation plot of the three roots of the cubic polynomial in
    \Cref{law:rate_n2} as \(\epsilon\) varies. Parameters are
    \(L^{(1)} = L^{(2)} = 1\), \(\mu^{(1)} = 0.9\), and
  \(\mu^{(2)} \in \{1, 0.1, 0.01\}\) from left to right. The dashed curve is
  \(\rho=\epsilon\), shown as a reference to indicate that no root branch collapses
  to this expression uniformly across heterogeneity settings.}
  \label{fig:poly_bifurcation}
\end{figure*}

\subsection{Comparison with an existing \texorpdfstring{\(\EFtw\)}{EF21} analysis}
\Cref{fig:richtarik_comparison} compares the relative iteration count predicted
by \Cref{law:rate_n2} against the distributed \(\EFtw\) rate from
\citet{richtarik_ef21_2021}. The \(\EFtw\) baseline uses the largest step size
allowed by that analysis. The vertical axis is
\(\log(\rho_{\EFtw})/\log(\rho_\star)\), where \(\rho_\star\) is the
rate in \Cref{law:rate_n2}; a value of \(1/2\), for example, means that
\Cref{law:rate_n2} reaches a fixed target accuracy in half as many iterations as
the corresponding \(\EFtw\) bound. The corresponding script in the
\href[pdfnewwindow=true]{https://github.com/DanielBergThomsen/distributed-error-feedback/blob/main/experiments.py}{public repository}
can be reused with other published rates or standard tuning rules.
\begin{figure*}[t]
  \centering
  \includegraphics[width=.9\textwidth]{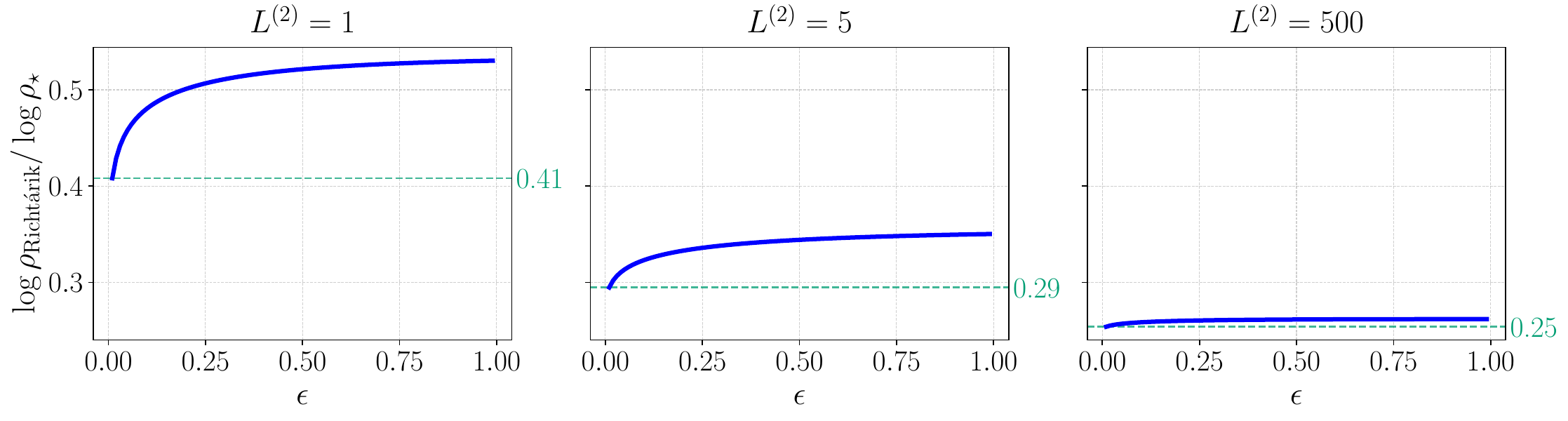}
  \caption{Iteration-complexity comparison between the rate \(\rho_\star\)
    from Empirical Law~\ref{law:rate_n2} and the distributed \(\EFtw\) rate from
    \citet{richtarik_ef21_2021}. The vertical axis is
    \(\log(\rho_{\EFtw})/\log(\rho_\star)\), so smaller values indicate
    fewer iterations for the empirical-law rate to reach a fixed target accuracy.
    Here \(n=2\), \(\mu^{(1)}=\mu^{(2)}=0.5\), \(L^{(1)}=1\), and
    \(L^{(2)}\in\{1,5,500\}\) from left to right.}
  \label{fig:richtarik_comparison}
\end{figure*}

\newpage
\section{Verification Details}
\label{sec:verification_details}
Validation of the empirical laws was performed using the Performance Estimation Problem (PEP)
framework~\cite{drori2014contributions,taylor2017smooth}. The verifications follow the SDP-based Lyapunov identification
techniques of \citet{taylor2018lyapunov,upadhyaya2023automated}, with the empirical-law certificates implemented directly as
CVXPY Lyapunov LMIs in the public notebooks. Some feasibility
checks were numerically delicate on highly heterogeneous instances. We therefore accepted only clean solver outputs as
certificates: outputs with known numerical warning patterns were treated as uncertified in the corresponding feasibility check.
MOSEK~\citep{mosek} was used for bisection and screening, while SDPA~\citep{yamashita2010sdpa7} was used as an
independent high-precision confirmation step for apparent violations of the empirical laws.
Unless stated otherwise, strict MOSEK runs used one thread and interior-point feasibility, gap, and barrier-reduction tolerances \(10^{-10}\);
SDPA confirmation runs used \texttt{epsilonStar}$=\texttt{epsilonDash}=10^{-11}$, \texttt{mpfPrecision}$=4096$, and
\texttt{maxIteration}$=2000$, together with the fixed SDPA tuning parameters reported in the public notebooks.

Across all sweeps, worker permutations were deduplicated because the LMIs are invariant under agent relabeling. For each raw
regularity configuration, the smoothness constants were normalized by \(L_{\max}=\max_i L^{(i)}\), and the corresponding
\(\mu^{(i)}\) values were set through \(\mu^{(i)}=L^{(i)}/\kappa^{(i)}\) after this normalization. This common rescaling only
changes the step-size units and leaves the contraction-factor feasibility unchanged. To test tightness or optimality,
we attempted a \(1\%\) improvement target, denoted \(\rho_{\mathrm{imp}}=0.99\,\rho\). An apparent improvement was treated as a counterexample only
when this improved target was confirmed by the prescribed solver sequence without numerical warnings. No such cleanly confirmed counterexamples were found in any of the
experiments below.

\subsection{Verification of Optimal Step Size (Empirical Law \ref{law:opt_step})}
The optimal-step experiment was run separately for \(\EF\) and \(\EFtw\). The \(\EF\) checks used the homogeneous-worker
normalization corresponding to the shared-minimizer setting of \Cref{thm:ef}. Each sweep used \(10\) linearly spaced values of
\(\epsilon \in [0.05,0.95]\). For \(n=2\), we used \(L^{(i)} \in \{1,100,1000\}\) and
\(\kappa^{(i)} \in \{2,100,1000\}\), giving \(450\) configurations per method. For \(n=3\), we used
\(L^{(i)} \in \{1,1000\}\) and \(\kappa^{(i)} \in \{2,100,1000\}\), giving \(560\) configurations per method.
For \(n=4\), we used \(L^{(i)} \in \{1,1000\}\) and \(\kappa^{(i)} \in \{2,1000\}\), giving \(350\)
configurations per method.

For each instance, the theoretical step size \(\eta^\star\) from \Cref{law:opt_step} was evaluated in the full Lyapunov LMI.
For \(n=2\), the reference rate \(\rho_\star\) was taken from the cubic formula in \Cref{law:rate_n2} and then certified at
\(\eta^\star\). For \(n=3,4\), \(\rho_\star\) was computed by bisection on \([0,1]\) with tolerance \(10^{-7}\).

The candidate step sizes are not drawn from the whole interval \([0, 2n/\sum_i(L^{(i)}+\mu^{(i)})]\). Instead, the search is 
restricted to the band forced by the exact-compression mode. Indeed, if the compressor returns the uncompressed gradient and the 
error feedback state is zero, both \(\EF\) and \(\EFtw\) reduce to gradient descent on the averaged objective 
\(\bar f \coloneqq \tfrac1n \sum_i f^{(i)}\), whose smoothness and strong-convexity parameters are 
\(\bar L \coloneqq \tfrac1n \sum_i L^{(i)}\) and \(\bar \mu \coloneqq \tfrac1n \sum_i \mu^{(i)}\). The known rate of gradient 
descent on \(\mathcal F_{\bar \mu,\bar L}\) is \(\max_{\lambda \in [\bar \mu,\bar L]} (1-\eta \lambda)^2\), so any candidate 
achieving rate \(\rho_\star\) on the full problem class must satisfy
\[
  \max_{\lambda \in [\bar \mu,\bar L]} (1-\eta \lambda)^2 \le \rho_\star,
  \qquad\text{hence}\qquad
  \eta \in \left[\frac{1-\sqrt{\rho_\star}}{\bar \mu}, \frac{1+\sqrt{\rho_\star}}{\bar L}\right].
\]
A scan of \(40\) equally spaced values was then performed in this interval, clipped to
\([0, 2n/\sum_i(L^{(i)}+\mu^{(i)})]\).

Each candidate step size was first screened with MOSEK at \(\rho_{\mathrm{imp}} = 0.99\,\rho_\star\). Candidates that were not
certified feasible at this target were discarded. For the remaining candidates, the candidate rate was recomputed by bisection;
only candidates still improving on \(\rho_\star\) by at least \(1\%\) were sent to SDPA for high-precision confirmation. This two-stage
procedure avoids false positives from near-boundary numerical noise while keeping the exhaustive search manageable.

\subsection{Verification of Lyapunov Function Structure (Empirical Law \ref{law:lyap_ef21})}
This experiment was run for \(n\in\{2,3,4\}\), with \(10\) linearly spaced values of \(\epsilon \in [0.05,0.95]\).
For each worker, \((L^{(i)},\kappa^{(i)})\) was chosen from
\(\{1,100,1000\}\times\{2,100,1000\}\). This gives \(450\), \(1650\), and \(4950\) configurations for
\(n=2,3,4\), respectively.

At the conjectured step size \(\eta^\star\), bisection on \([0,1]\) with tolerance \(10^{-7}\) was used to compute the rate
\(\rho_{\mathrm{simp}}\) certified by the simplified Lyapunov structure in \Cref{law:lyap_ef21}. The full \(\EFtw\)
Lyapunov class was then tested at \(\rho_{\mathrm{imp}}=0.99\,\rho_{\mathrm{simp}}\). The first full-class check used
MOSEK with strict feasibility and gap tolerances. Whenever this check suggested an improvement, the simplified structure was
directly retested at \(\rho_{\mathrm{imp}}\) and at a further \(10^{-4}\) relative margin; only remaining candidates were
confirmed with a high-precision SDPA solve. No SDPA-confirmed full-class improvement over the simplified structure was found.

\subsection{Verification of $n=2$ Rate (Empirical Law \ref{law:rate_n2})}
For the \(n=2\) rate law, the sweep used \(50\) linearly spaced values of \(\epsilon \in [0.05,0.95]\). For each worker,
the unscaled parameters were drawn from \(L^{(i)} \in \{1,10,100,1000\}\) and
\(\kappa^{(i)} \in \{2,10,100,1000,10000\}\), giving \(210\) unordered regularity configurations and
\(10500\) epsilon-configuration pairs. The candidate rate \(\rho_\star\) was taken as the largest valid real root of the cubic in
\Cref{law:rate_n2}, and the step size was set to \(\eta^\star\).

The empirical law concerns \(\EFtw\); as an auxiliary check, the same candidate was also tested for \(\EF\) using the
homogeneous-worker normalization. For each method, the pair \((\rho_\star,\eta^\star)\) was certified by the corresponding
Lyapunov LMI using MOSEK with strict \(10^{-10}\) feasibility and gap tolerances. As a local sharpness check, the stronger
target \((0.99\,\rho_\star,\eta^\star)\) was also tested and was not certified without numerical warnings on any instance.

\subsection{Verification of \(\EC\) Tuning (Empirical Law \ref{law:econtrol_tuning})}
For \(\EC\), the experiment considered \(n \in \{2,3,4\}\), with \(10\) linearly spaced values of
\(\epsilon \in [0.05,0.95]\), \(L^{(i)} \in \{1,100,1000\}\), and
\(\kappa^{(i)} \in \{2,100,1000\}\). This gives \(450\), \(1650\), and \(4950\) configurations for
\(n=2,3,4\), respectively. For each instance, the conjectured tuning
\((\eta,\gamma)=(0,\gamma^\star_{\EC})\) was evaluated by bisection on the full \(\EC\) Lyapunov LMI with tolerance
\(10^{-7}\), using MOSEK with strict feasibility and gap tolerances.

To search for counterexamples, we tested whether any grid point in the search region could certify a rate at least \(1\%\) smaller than the
reference rate. Writing this target as \(\rho_{\mathrm{imp}}=0.99\,\rho_\star\), the notebook scanned a \(10\times 10\)
grid. The \(\eta\)-grid was drawn in the unnormalized scale over
\[
  \eta \in \left[0,\frac{2n}{\sum_i(L^{(i)}_{\mathrm{raw}}+\mu^{(i)}_{\mathrm{raw}})}\right],
\]
where \(\mu^{(i)}_{\mathrm{raw}}=L^{(i)}_{\mathrm{raw}}/\kappa^{(i)}\). The \(\gamma\)-grid used the exact-compression
band at the improved target,
\[
  \gamma \in
  \left[
    \frac{1-\sqrt{\rho_{\mathrm{imp}}}}{\bar\mu},
    \min\left\{\gamma_{\max},\frac{1+\sqrt{\rho_{\mathrm{imp}}}}{\bar L}\right\}
  \right],
\]
where \(\bar L\) and \(\bar\mu\) are computed after max-\(L\) normalization and
\(\gamma_{\max}=L_{\max}\,2n/\sum_i(L^{(i)}_{\mathrm{raw}}+\mu^{(i)}_{\mathrm{raw}})\). Empty
\(\gamma\)-bands were skipped. Each candidate was first screened on the full \(\EC\) LMI with MOSEK at
\(\rho_{\mathrm{imp}}\); any apparent improvement was then re-evaluated by bisection and confirmed with a
high-precision SDPA solve. No SDPA-confirmed improvement was found in any tested instance.
\end{document}

%% file: preamble.tex
\usepackage{enumitem}
\usepackage{mathtools}
\usepackage{amsmath,amssymb, amsthm}
\usepackage{array}
\usepackage{makecell}
\usepackage{multirow}
\usepackage{tabularray}
\usepackage[export]{adjustbox}
\usepackage{placeins}

\newcommand{\defeq}{\coloneq}
\newcommand{\F}{\mathcal{F}}
\newcommand{\Fml}{\F_{\mu, L}}
\newcommand{\C}{\mathcal{C}}
\newcommand{\V}{\mathcal{V}}
\newcommand{\R}{\mathbb{R}}
\newcommand{\E}{\mathbb{E}}

\newcolumntype{L}[1]{>{\raggedright\arraybackslash}m{#1}}
\newcolumntype{C}[1]{>{\centering\arraybackslash}m{#1}}

\newcommand{\EFtw}{\mathrm{EF^{21}}}
\newcommand{\EF}{\mathrm{EF}}
\newcommand{\CGD}{\mathrm{CGD}}
\newcommand{\EC}{\mathrm{EControl}}
\newcommand{\xs}{x_\star}

\newcommand{\rhostar}[1]{\rho_\star\!\left(#1\right)}
\newcommand{\M}{\mathcal{M}}

\newcommand{\Mfxxi}[3]{\M\!\left(#1, #2; #3\right)}
\newcommand{\SO}{\mathbb{S}}
\DeclareMathOperator{\tr}{Tr}
\newsavebox{\badgebox}
\newsavebox{\badgealignbox}
\DeclareRobustCommand{\alignedbadge}[1]{%
  \sbox{\badgebox}{\includegraphics[height=1.00em]{#1}}%
  \sbox{\badgealignbox}{X}%
  \raisebox{\dimexpr(\ht\badgealignbox-\dp\badgealignbox-\ht\badgebox+\dp\badgebox)/2\relax}{\usebox{\badgebox}}%
}
\newcommand{\PEPbadge}{\alignedbadge{PEP.pdf}}
\newcommand{\CASbadge}{\alignedbadge{CAS.pdf}}

\usepackage{hyperref}
\usepackage{url}
\usepackage{cuted}
\usepackage[capitalize,noabbrev]{cleveref}

\crefname{figure}{figure}{figures}
\Crefname{figure}{Figure}{Figures}
\crefname{table}{table}{tables}
\Crefname{table}{Table}{Tables}

\crefname{section}{section}{sections}
\Crefname{section}{Section}{Sections}
\crefname{subsection}{subsection}{subsections}
\Crefname{subsection}{Subsection}{Subsections}

\crefname{equation}{equation}{equations}
\Crefname{equation}{Eq.}{Eqs.}

\crefname{theorem}{theorem}{theorems}
\Crefname{theorem}{Theorem}{Theorems}
\crefname{lemma}{lemma}{lemmas}
\Crefname{lemma}{Lemma}{Lemmas}
\crefname{proposition}{proposition}{propositions}
\Crefname{proposition}{Proposition}{Propositions}
\crefname{corollary}{corollary}{corollaries}
\Crefname{corollary}{Corollary}{Corollaries}
\crefname{conjecture}{conjecture}{conjectures}
\Crefname{conjecture}{Conjecture}{Conjectures}
\crefname{definition}{definition}{definitions}
\Crefname{definition}{Definition}{Definitions}
\crefname{remark}{remark}{remarks}
\Crefname{remark}{Remark}{Remarks}
\crefname{example}{example}{examples}
\Crefname{example}{Example}{Examples}
\crefname{assumption}{assumption}{assumptions}
\Crefname{assumption}{Assumption}{Assumptions}

\crefname{algorithm}{algorithm}{algorithms}
\Crefname{algorithm}{Algorithm}{Algorithms}

\crefname{appendix}{appendix}{appendices}
\Crefname{appendix}{Appendix}{Appendices}

\usepackage{thmtools}
\usepackage{thm-restate}

%% file: main.bbl
\begin{thebibliography}{50}
\providecommand{\natexlab}[1]{#1}
\providecommand{\url}[1]{\texttt{#1}}
\expandafter\ifx\csname urlstyle\endcsname\relax
  \providecommand{\doi}[1]{doi: #1}\else
  \providecommand{\doi}{doi: \begingroup \urlstyle{rm}\Url}\fi

\bibitem[Alistarh et~al.(2017)Alistarh, Grubic, Li, Tomioka, and
  Vojnovic]{alistarh_qsgd_2017}
Alistarh, D., Grubic, D., Li, J., Tomioka, R., and Vojnovic, M.
\newblock {QSGD}: {Communication}-{Efficient} {SGD} via {Gradient}
  {Quantization} and {Encoding}.
\newblock \emph{Advances in Neural Information Processing Systems (NeurIPS)},
  30, 2017.

\bibitem[Alistarh et~al.(2018)Alistarh, Hoefler, Johansson, Konstantinov,
  Khirirat, and Renggli]{alistarh_convergence_2018}
Alistarh, D., Hoefler, T., Johansson, M., Konstantinov, N., Khirirat, S., and
  Renggli, C.
\newblock The {Convergence} of {Sparsified} {Gradient} {Methods}.
\newblock \emph{Advances in Neural Information Processing Systems (NeurIPS)},
  31, 2018.

\bibitem[Berg~Thomsen et~al.(2025)Berg~Thomsen, Taylor, and
  Dieuleveut]{bergthomsen2025errorfeedback}
Berg~Thomsen, D., Taylor, A., and Dieuleveut, A.
\newblock Tight analyses of first-order methods with error feedback.
\newblock \emph{Advances in Neural Information Processing Systems (NeurIPS)},
  2025.

\bibitem[Beznosikov et~al.(2023)Beznosikov, Horváth, Richtárik, and
  Safaryan]{beznosikov_biased_2020}
Beznosikov, A., Horváth, S., Richtárik, P., and Safaryan, M.
\newblock On {Biased} {Compression} for {Distributed} {Learning}.
\newblock \emph{Journal of Machine Learning Research}, 24\penalty0
  (276):\penalty0 1--50, 2023.

\bibitem[Chilimbi et~al.(2014)Chilimbi, Suzue, Apacible, and
  Kalyanaraman]{chilimbi_project_2014}
Chilimbi, T., Suzue, Y., Apacible, J., and Kalyanaraman, K.
\newblock Project adam: Building an efficient and scalable deep learning
  training system.
\newblock In \emph{{USENIX} {Symposium} on {Operating} {Systems} {Design} and
  {Implementation} ({OSDI} 14)}, 2014.

\bibitem[Condat et~al.(2022)Condat, Yi, and Richt{\'a}rik]{condat2022ef}
Condat, L., Yi, K., and Richt{\'a}rik, P.
\newblock Ef-bv: A unified theory of error feedback and variance reduction
  mechanisms for biased and unbiased compression in distributed optimization.
\newblock \emph{Advances in Neural Information Processing Systems},
  35:\penalty0 17501--17514, 2022.

\bibitem[Cutler(1952)]{cutler1952differential}
Cutler, C.~C.
\newblock Differential quantization of communication signals.
\newblock US Patent 2,605,361, July 1952.
\newblock URL \url{https://patents.google.com/patent/US2605361A/en}.
\newblock Filed June 29, 1950; issued July 29, 1952.

\bibitem[Drori(2014)]{drori2014contributions}
Drori, Y.
\newblock \emph{Contributions to the Complexity Analysis of Optimization
  Algorithms}.
\newblock PhD thesis, Tel-Aviv University, 2014.

\bibitem[Egger et~al.(2025)Egger, Bitar, Wachter-Zeh, Weinberger, and
  Gunduz]{egger2025bicompfl}
Egger, M., Bitar, R., Wachter-Zeh, A., Weinberger, N., and Gunduz, D.
\newblock Bicompfl: Bi-directional compression for stochastic federated
  learning.
\newblock In \emph{ICML 2025 Workshop on Machine Learning for Wireless
  Communication and Networks (ML4Wireless)}, 2025.

\bibitem[Fatkhullin et~al.(2021)Fatkhullin, Sokolov, Gorbunov, Li, and
  Richtárik]{fatkhullin_ef21_2021}
Fatkhullin, I., Sokolov, I., Gorbunov, E., Li, Z., and Richtárik, P.
\newblock {EF21} with {Bells} \& {Whistles}: {Practical} {Algorithmic}
  {Extensions} of {Modern} {Error} {Feedback}, October 2021.
\newblock arXiv:2110.03294 [cs, math].

\bibitem[Fatkhullin et~al.(2023)Fatkhullin, Tyurin, and
  Richt{\'a}rik]{fatkhullin2023momentum}
Fatkhullin, I., Tyurin, A., and Richt{\'a}rik, P.
\newblock Momentum provably improves error feedback!
\newblock \emph{Advances in Neural Information Processing Systems (NeurIPS)},
  2023.

\bibitem[Fatkhullin et~al.(2025)Fatkhullin, Sokolov, Gorbunov, Li, and
  Richt{\'a}rik]{fatkhullin2025ef21}
Fatkhullin, I., Sokolov, I., Gorbunov, E., Li, Z., and Richt{\'a}rik, P.
\newblock Ef21 with bells \& whistles: Six algorithmic extensions of modern
  error feedback.
\newblock \emph{Journal of Machine Learning Research}, 26\penalty0
  (189):\penalty0 1--50, 2025.

\bibitem[Fazel et~al.(2003)Fazel, Hindi, and Boyd]{fazel2003log}
Fazel, M., Hindi, H., and Boyd, S.~P.
\newblock Log-det heuristic for matrix rank minimization with applications to
  {Hankel} and {Euclidean} distance matrices.
\newblock In \emph{American Control Conference (ACC)}, 2003.

\bibitem[Gao et~al.(2023)Gao, Islamov, and Stich]{gao2023econtrol}
Gao, Y., Islamov, R., and Stich, S.
\newblock Econtrol: Fast distributed optimization with compression and error
  control.
\newblock \emph{arXiv preprint arXiv:2311.05645}, 2023.

\bibitem[Gorbunov et~al.(2020)Gorbunov, Kovalev, Makarenko, and
  Richtarik]{gorbunov_linearly_2020}
Gorbunov, E., Kovalev, D., Makarenko, D., and Richtarik, P.
\newblock Linearly {Converging} {Error} {Compensated} {SGD}.
\newblock In \emph{Advances in {Neural} {Information} {Processing} {Systems}
  ({NeurIPS})}, 2020.

\bibitem[Goujaud et~al.(2024)Goujaud, Moucer, Glineur, Hendrickx, Taylor, and
  Dieuleveut]{goujaud2022pepit}
Goujaud, B., Moucer, C., Glineur, F., Hendrickx, J.~M., Taylor, A.~B., and
  Dieuleveut, A.
\newblock {PEPit}: computer-assisted worst-case analyses of first-order
  optimization methods in {Python}.
\newblock \emph{Mathematical Programming Computation}, 16\penalty0
  (3):\penalty0 337--367, 2024.

\bibitem[Gruntkowska et~al.(2025)Gruntkowska, Gaponov, Tovmasyan, and
  Richt{\'a}rik]{gruntkowska2025error}
Gruntkowska, K., Gaponov, A., Tovmasyan, Z., and Richt{\'a}rik, P.
\newblock Error feedback for muon and friends.
\newblock \emph{arXiv preprint arXiv:2510.00643}, 2025.

\bibitem[Harrane et~al.(2018)Harrane, Flamary, and
  Richard]{harrane_reducing_2018}
Harrane, I. E.~K., Flamary, R., and Richard, C.
\newblock On reducing the communication cost of the diffusion lms algorithm.
\newblock \emph{IEEE Transactions on Signal and Information Processing over
  Networks}, 5\penalty0 (1):\penalty0 100--112, 2018.

\bibitem[Inose \& Yasuda(2005)Inose and Yasuda]{inose2005unity}
Inose, H. and Yasuda, Y.
\newblock A unity bit coding method by negative feedback.
\newblock \emph{Proceedings of the IEEE}, 51\penalty0 (11):\penalty0
  1524--1535, 2005.

\bibitem[Ivkin et~al.(2019)Ivkin, Rothchild, Ullah, Braverman, Stoica, and
  Arora]{ivkin_communication-efficient_2019}
Ivkin, N., Rothchild, D., Ullah, E., Braverman, V., Stoica, I., and Arora, R.
\newblock Communication-efficient {Distributed} {SGD} with {Sketching}.
\newblock \emph{Advances in Neural Information Processing Systems (NeurIPS)},
  2019.

\bibitem[Kairouz et~al.(2019)Kairouz, McMahan, Avent, Bellet, Bennis, Bhagoji,
  Bonawitz, Charles, Cormode, Cummings, D'Oliveira, Rouayheb, Evans, Gardner,
  Garrett, Gascón, Ghazi, Gibbons, Gruteser, Harchaoui, He, He, Huo,
  Hutchinson, Hsu, Jaggi, Javidi, Joshi, Khodak, Konečný, Korolova,
  Koushanfar, Koyejo, Lepoint, Liu, Mittal, Mohri, Nock, Özgür, Pagh,
  Raykova, Qi, Ramage, Raskar, Song, Song, Stich, Sun, Suresh, Tramèr,
  Vepakomma, Wang, Xiong, Xu, Yang, Yu, Yu, and Zhao]{kairouz_advances_2019}
Kairouz, P., McMahan, H.~B., Avent, B., Bellet, A., Bennis, M., Bhagoji, A.~N.,
  Bonawitz, K., Charles, Z., Cormode, G., Cummings, R., D'Oliveira, R. G.~L.,
  Rouayheb, S.~E., Evans, D., Gardner, J., Garrett, Z., Gascón, A., Ghazi, B.,
  Gibbons, P.~B., Gruteser, M., Harchaoui, Z., He, C., He, L., Huo, Z.,
  Hutchinson, B., Hsu, J., Jaggi, M., Javidi, T., Joshi, G., Khodak, M.,
  Konečný, J., Korolova, A., Koushanfar, F., Koyejo, S., Lepoint, T., Liu,
  Y., Mittal, P., Mohri, M., Nock, R., Özgür, A., Pagh, R., Raykova, M., Qi,
  H., Ramage, D., Raskar, R., Song, D., Song, W., Stich, S.~U., Sun, Z.,
  Suresh, A.~T., Tramèr, F., Vepakomma, P., Wang, J., Xiong, L., Xu, Z., Yang,
  Q., Yu, F.~X., Yu, H., and Zhao, S.
\newblock Advances and {Open} {Problems} in {Federated} {Learning}.
\newblock \emph{arXiv:1912.04977 [cs, stat]}, December 2019.

\bibitem[Karimireddy et~al.(2019)Karimireddy, Rebjock, Stich, and
  Jaggi]{karimireddy_error_2019}
Karimireddy, S.~P., Rebjock, Q., Stich, S., and Jaggi, M.
\newblock Error {Feedback} {Fixes} {SignSGD} and other {Gradient} {Compression}
  {Schemes}.
\newblock In \emph{International {Conference} on {Machine} {Learning}
  ({ICML})}, 2019.

\bibitem[Karimireddy et~al.(2020)Karimireddy, Kale, Mohri, Reddi, Stich, and
  Suresh]{karimireddy_scaffold_2020}
Karimireddy, S.~P., Kale, S., Mohri, M., Reddi, S., Stich, S., and Suresh,
  A.~T.
\newblock Scaffold: Stochastic controlled averaging for federated learning.
\newblock In \emph{International Conference on Machine Learning (ICML)}, 2020.

\bibitem[Koloskova et~al.(2019)Koloskova, Stich, and
  Jaggi]{koloskova_decentralized_2019}
Koloskova, A., Stich, S., and Jaggi, M.
\newblock Decentralized {Stochastic} {Optimization} and {Gossip} {Algorithms}
  with {Compressed} {Communication}.
\newblock In \emph{International {Conference} on {Machine} {Learning}
  ({ICML})}, 2019.

\bibitem[Li \& Li(2022)Li and Li]{li2022analysis}
Li, X. and Li, P.
\newblock Analysis of error feedback in federated non-convex optimization with
  biased compression.
\newblock \emph{arXiv preprint arXiv:2211.14292}, 2022.

\bibitem[Ma et~al.(2018)Ma, Bassily, and Belkin]{ma2018power}
Ma, S., Bassily, R., and Belkin, M.
\newblock The power of interpolation: Understanding the effectiveness of sgd in
  modern over-parametrized learning.
\newblock In \emph{International Conference on Machine Learning}, pp.\
  3325--3334. PMLR, 2018.

\bibitem[McMahan et~al.(2017)McMahan, Moore, Ramage, Hampson, and
  Arcas]{mcmahan_communication-efficient_2017}
McMahan, B., Moore, E., Ramage, D., Hampson, S., and Arcas, B. A.~y.
\newblock Communication-{Efficient} {Learning} of {Deep} {Networks} from
  {Decentralized} {Data}.
\newblock In \emph{{International} {Conference} on {Artificial} {Intelligence}
  and {Statistics} {(AISTATS)}}, April 2017.

\bibitem[Mishchenko et~al.(2022)Mishchenko, Malinovsky, Stich, and
  Richt{\'a}rik]{mishchenko2022proxskip}
Mishchenko, K., Malinovsky, G., Stich, S., and Richt{\'a}rik, P.
\newblock Proxskip: Yes! local gradient steps provably lead to communication
  acceleration! finally!
\newblock In \emph{International Conference on Machine Learning}, pp.\
  15750--15769. PMLR, 2022.

\bibitem[{MOSEK ApS}(2025)]{mosek}
{MOSEK ApS}.
\newblock \emph{{MOSEK} Optimizer API for Python 11.0.21}, 2025.
\newblock URL \url{https://docs.mosek.com/11.0/pythonapi/index.html}.

\bibitem[Philippenko \& Dieuleveut(2020)Philippenko and
  Dieuleveut]{philippenko_artemis_2020}
Philippenko, C. and Dieuleveut, A.
\newblock Bidirectional compression in heterogeneous settings for distributed
  or federated learning with partial participation: tight convergence
  guarantees.
\newblock \emph{arXiv:2006.14591 [cs, stat]}, 2020.

\bibitem[Philippenko \& Dieuleveut(2021)Philippenko and
  Dieuleveut]{philippenko_preserved_2021}
Philippenko, C. and Dieuleveut, A.
\newblock Preserved central model for faster bidirectional compression in
  distributed settings.
\newblock \emph{{Advances} in {Neural} {Information} {Processing} {Systems}
  ({NeurIPS})}, 2021.

\bibitem[Redie et~al.(2026)Redie, Arablouei, and
  Werner]{redie2026sapefstepaheadpartialerror}
Redie, D.~K., Arablouei, R., and Werner, S.
\newblock Sa-pef: Step-ahead partial error feedback for efficient federated
  learning, 2026.
\newblock URL \url{https://arxiv.org/abs/2601.20738}.

\bibitem[Richt\'arik et~al.(2021)Richt\'arik, Sokolov, and
  Fatkhullin]{richtarik_ef21_2021}
Richt\'arik, P., Sokolov, I., and Fatkhullin, I.
\newblock {EF21}: {A} {New}, {Simpler}, {Theoretically} {Better}, and
  {Practically} {Faster} {Error} {Feedback}.
\newblock In \emph{Advances in {Neural} {Information} {Processing} {Systems}
  ({NeurIPS})}, 2021.

\bibitem[Seide et~al.(2014)Seide, Fu, Droppo, Li, and Yu]{seide_1-bit_2014}
Seide, F., Fu, H., Droppo, J., Li, G., and Yu, D.
\newblock 1-{Bit} {Stochastic} {Gradient} {Descent} and its {Application} to
  {Data}-{Parallel} {Distributed} {Training} of {Speech} {DNNs}.
\newblock In \emph{{Annual} {Conference} of the {International} {Speech}
  {Communication} {Association}}, 2014.

\bibitem[Stich \& Karimireddy(2020)Stich and
  Karimireddy]{stich_error-feedback_2020}
Stich, S.~U. and Karimireddy, S.~P.
\newblock The error-feedback framework: {Better} rates for {SGD} with delayed
  gradients and compressed updates.
\newblock \emph{Journal of Machine Learning Research}, 21:\penalty0 1--36,
  2020.

\bibitem[Stich et~al.(2018)Stich, Cordonnier, and Jaggi]{stich_sparsified_2018}
Stich, S.~U., Cordonnier, J.-B., and Jaggi, M.
\newblock Sparsified {SGD} with {Memory}.
\newblock In \emph{Advances in {Neural} {Information} {Processing} {Systems}
  ({NeurIPS})}. 2018.

\bibitem[Strom(2015)]{strom_scalable_2015}
Strom, N.
\newblock Scalable distributed {DNN} training using commodity {GPU} cloud
  computing.
\newblock In \emph{{Annual} {Conference} of the {International} {Speech}
  {Communication} {Association}}, 2015.

\bibitem[Sundararajan et~al.(2019)Sundararajan, Van~Scoy, and
  Lessard]{sundararajan2019canonical}
Sundararajan, A., Van~Scoy, B., and Lessard, L.
\newblock A canonical form for first-order distributed optimization algorithms.
\newblock In \emph{2019 American Control Conference (ACC)}, pp.\  4075--4080,
  2019.
\newblock \doi{10.23919/ACC.2019.8814838}.

\bibitem[Sundararajan et~al.(2020)Sundararajan, Van~Scoy, and
  Lessard]{sundararajan2020analysis}
Sundararajan, A., Van~Scoy, B., and Lessard, L.
\newblock Analysis and design of first-order distributed optimization
  algorithms over time-varying graphs.
\newblock \emph{IEEE Transactions on Control of Network Systems}, 7\penalty0
  (4):\penalty0 1597--1608, 2020.
\newblock \doi{10.1109/TCNS.2020.2988009}.

\bibitem[Tang et~al.(2021)Tang, Li, Liu, and Yan]{tang2021errorcompensatedx}
Tang, H., Li, Y., Liu, J., and Yan, M.
\newblock Errorcompensatedx: error compensation for variance reduced
  algorithms.
\newblock \emph{Advances in Neural Information Processing Systems},
  34:\penalty0 18102--18113, 2021.

\bibitem[Taylor et~al.(2018)Taylor, Van~Scoy, and Lessard]{taylor2018lyapunov}
Taylor, A., Van~Scoy, B., and Lessard, L.
\newblock {Lyapunov} {Functions} for {First}-{Order} {Methods}: {Tight}
  {Automated} {Convergence} {Guarantees}.
\newblock In \emph{International Conference on Machine Learning (ICML)}, 2018.

\bibitem[Taylor et~al.(2017{\natexlab{a}})Taylor, Hendrickx, and
  Glineur]{taylor2017performance}
Taylor, A.~B., Hendrickx, J.~M., and Glineur, F.
\newblock {Performance estimation toolbox (PESTO): automated worst-case
  analysis of first-order optimization methods}.
\newblock In \emph{Conference on Decision and Control (CDC)},
  2017{\natexlab{a}}.

\bibitem[Taylor et~al.(2017{\natexlab{b}})Taylor, Hendrickx, and
  Glineur]{taylor2017smooth}
Taylor, A.~B., Hendrickx, J.~M., and Glineur, F.
\newblock Smooth strongly convex interpolation and exact worst-case performance
  of first-order methods.
\newblock \emph{Mathematical Programming}, 161\penalty0 (1-2):\penalty0
  307--345, 2017{\natexlab{b}}.

\bibitem[Tian et~al.(2026)Tian, Li, Liu, and Shi]{tian2026ef21}
Tian, H., Li, X., Liu, S., and Shi, Y.
\newblock Ef21-rr: Fast o (1/t) rate for non-convex federated optimization with
  error feedback.
\newblock \emph{Automatica}, 183:\penalty0 112655, 2026.

\bibitem[Upadhyaya et~al.(2025)Upadhyaya, Banert, Taylor, and
  Giselsson]{upadhyaya2023automated}
Upadhyaya, M., Banert, S., Taylor, A.~B., and Giselsson, P.
\newblock {Automated} tight {Lyapunov} analysis for first-order methods.
\newblock \emph{Mathematical Programming}, 209\penalty0 (1):\penalty0 133--170,
  2025.

\bibitem[Vaswani et~al.(2019)Vaswani, Bach, and Schmidt]{vaswani2019fast}
Vaswani, S., Bach, F., and Schmidt, M.
\newblock Fast and faster convergence of sgd for over-parameterized models and
  an accelerated perceptron.
\newblock In \emph{The 22nd international conference on artificial intelligence
  and statistics}, pp.\  1195--1204. PMLR, 2019.

\bibitem[Vempala(2004)]{vempala2005random}
Vempala, S.~S.
\newblock \emph{The Random Projection Method}, volume~65 of \emph{DIMACS Series
  in Discrete Mathematics and Theoretical Computer Science}.
\newblock American Mathematical Society, 2004.
\newblock ISBN 978-0-8218-3793-1.

\bibitem[Wu et~al.(2018)Wu, Huang, Huang, and Zhang]{wu_error_2018}
Wu, J., Huang, W., Huang, J., and Zhang, T.
\newblock Error {Compensated} {Quantized} {SGD} and its {Applications} to
  {Large}-scale {Distributed} {Optimization}.
\newblock In \emph{International {Conference} on {Machine} {Learning}
  ({ICML})}, 2018.

\bibitem[Yamashita et~al.(2010)Yamashita, Fujisawa, Nakata, Kojima, Kobayashi,
  and Goto]{yamashita2010sdpa7}
Yamashita, M., Fujisawa, K., Nakata, K., Kojima, M., Kobayashi, K., and Goto,
  K.
\newblock A high-performance software package for semidefinite programs: {SDPA}
  7.
\newblock Technical Report B-460, Department of Mathematical and Computing
  Science, Tokyo Institute of Technology, 2010.

\bibitem[Zheng et~al.(2019)Zheng, Huang, and
  Kwok]{zheng_communication-efficient_2019}
Zheng, S., Huang, Z., and Kwok, J.
\newblock Communication-{Efficient} {Distributed} {Blockwise} {Momentum} {SGD}
  with {Error}-{Feedback}.
\newblock In \emph{Advances in {Neural} {Information} {Processing} {Systems}
  ({NeurIPS})}, 2019.

\end{thebibliography}
